%% file: main.tex
\documentclass{article}
\usepackage{kr}

\usepackage{times}
\usepackage{soul}
\usepackage{url}
\usepackage[hidelinks]{hyperref}
\usepackage[utf8]{inputenc}
\usepackage[small]{caption}
\usepackage{graphicx}
\usepackage{amsmath}
\usepackage{amsthm}
\usepackage{booktabs}
\usepackage{algorithm}
\usepackage{algorithmic}
\newtheorem{example}{Example}
\newtheorem{theorem}{Theorem}
\newtheorem{lemma}{Lemma}

\newtheorem{proposition}{Proposition}

\title{The Polynomial Counting Capabilities \\ of Message Passing Neural Networks}

\author{%
	Marco Sälzer$^1$\and
	Pascal Bergsträßer$^1$\and
	Anthony W.\ Lin$^{1,2}$ \\
	\affiliations
	$^1$RPTU University Kaiserslautern-Landau\\
	$^2$Max Planck Institute for Software Systems (MPI-SWS)\\
	\emails
	\{marco.saelzer, pbergstr\}@rptu.de,
	awlin@mpi-sws.org
}

\usepackage{amsfonts}
\usepackage{amssymb}
\usepackage{todonotes}
\usepackage{bbm}

\usetikzlibrary{matrix,positioning}
\usetikzlibrary{arrows.meta}
\usetikzlibrary{positioning}
\input{macros}
\begin{document}

\maketitle

%%
%% The abstract is a short summary of the work to be presented in the
%% article.
\begin{abstract}
    The counting power of Message Passing Neural Networks (MPNN) has been the
    subject of many recent papers, showing that they can express logic that involves 
    counting up to a threshold or more generally satisfy a linear arithmetic 
    constraint. 
    In this paper, we study the counting capabilities of MPNN beyond linear
    arithmetic, 
	primarily utilising local and global mean aggregations. In particular,
    our goal is to tease out conditions required to express extensions of
    graded modal logic with polynomial counting constraints.
    We show that global polynomial counting constraints in node-labelled 
    graphs can be checked using mean MPNN under mild assumptions. 
    Checking local constraints is also possible, 
    if we consider formulas with no nested modalities and additionally
    either (i) permit sum/max aggregations, or (ii) only restrict to
    regular graphs. We also show how formulas with nested modalities can be captured by mean
    MPNN over graphs with tree-like structures and similar assumptions.
\end{abstract}

\section{Introduction}
\input{sections/intro.tex}

\section{Preliminaries}
\label{sec:prelims}
\input{sections/prelims.tex}

\section{Shallow Peano Modal Logic Fragments}
\label{sec:polyconstraints}
\input{sections/shallow_polyconstraints.tex}

\section{Nested Peano Modal Logic Fragments}
\label{sec:nestedpolynonstraints}
\input{sections/nested_polyconstraints}

\section{Summary and Outlook}
\label{sec:outlook}
\input{sections/outlook.tex}

\paragraph{Acknowledgment.} 
We thank anonymous reviewers for their helpful feedback.
The authors are partially supported by Deutsche Forschungsgemeinschaft (grant 
number \href{https://gepris.dfg.de/gepris/projekt/522843867?language=en}{522843867}) and European Union\footnote{Views and opinions expressed are however those
	of the author(s) only and do not necessarily reflect those of the European
	Union or the European Research Council Executive Agency. Neither the
	European Union nor the granting authority can be held responsible for them.}
\includegraphics[width=0.75cm]{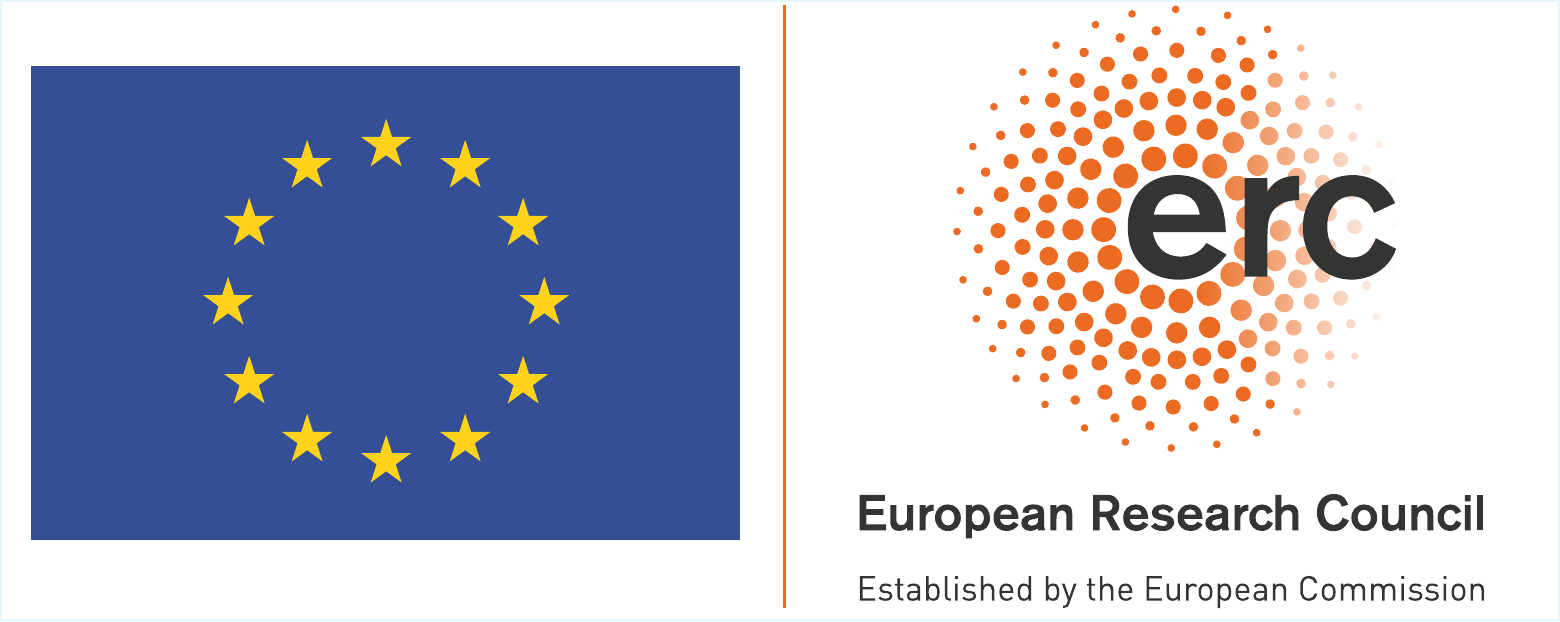}
(ERC, LASD, \href{https://doi.org/10.3030/101089343}{101089343}).

\section*{AI Declaration}

The authors have not employed any Generative AI tools.

%%
%% If your work has an appendix, this is the place to put it.
\appendix

\section{Omitted Technicalities}
\label{app:proofs}
\input{sections/proofs.tex}

% \section{MPM2 subsumes Ksharp and MP2}
% \label{app:ksharpmp2vsmpm}
% \input{sections/app_ksharpmp2vsmpm}

%% The file kr.bst is a bibliography style file for BibTeX 0.99c
\bibliographystyle{kr}
\bibliography{refs}
\end{document}

%% file: macros.tex
\newcommand*{\nats}{\mathbb{N}}
\newcommand*{\ints}{\mathbb{Z}}
\newcommand*{\rats}{\mathbb{Q}}

\newcommand*{\mset}[1]{\{\!\{#1\}\!\}}
\newcommand*{\bs}[1]{\boldsymbol{#1}}

\newcommand*{\neighin}{\mathrm{neigh}_\text{in}}
\newcommand*{\neighout}{\mathrm{neigh}_\text{out}}
\newcommand*{\neigh}{\mathrm{neigh}}

\newcommand*{\relu}{\mathit{relu}}

\newcommand*{\comb}{\mathit{comb}}
\newcommand*{\locagg}{\mathit{loc}}
\newcommand*{\globagg}{\mathit{glob}}

\newcommand*{\meanmpnn}{\mathcal{M}_\text{mean}}

\newcommand*{\md}{\mathit{md}}

\newcommand*{\PML}{\text{PML}}

\newcommand*{\sem}[1]{[\![#1]\!]}
\newcommand*{\modalities}[2]{\textit{Mod}^{#1}_{#2}}
\newcommand*{\walks}{\mathit{Trace}}

%% file: sections/intro.tex
Message Passing Neural Networks (MPNN) are a class of neural network-based models
trained for solving tasks over structured data, i.e., graphs. 
By design, MPNN effectively utilises the structure of the input data,
in stark contrast with other approaches such as using handcrafted encoding
followed by classical feedforward neural networks (FNN).
This has led them to become one of the go-to models for addressing tasks such as drug discovery
\cite{Bongini_drugdiscovery}, a broad range of knowledge graph applications \cite{Zhou_survey},
recommender systems \cite{Wu_recommender}, or natural science applications
\cite{Kipf_natural,Shlomi_2021_natural}.

Sparked by these and other promising applications of MPNN, recent years have seen a growing line
of research thoroughly investigating the expressive capabilities of MPNN using
formal frameworks such as logics. We provide
a comprehensive overview in Section~\ref{sec:related}.
Starting with comparisons to
classical logics like fragments of first-order logic with counting (up to a
threshold) done by \cite{Barceló2020The}, later
works by \cite{NunnSST24,BenediktLMT24} have explored tailored extensions of such classical logics by 
arithmetic expressions to tighten the characterisations and infer the complexity
of verification tasks related to MPNN. 
However, all of these works have one thing in common: the logics employed for comparison with
MPNN are confined to extensions by \emph{linear integer arithmetic}. For 
example, such logics could express relationship \emph{a node has the same 
number of blue-coloured neighbours and green-coloured neighbours}. 

In this paper, our goal is to study the capability of MPNN to express counting
properties \emph{beyond} linear integer arithmetic. \emph{Counting} plays an
important role especially owing to connection to \emph{graph kernels}
(e.g. see \cite{Kriege20,WL-kernel-schweitzer,Jaakkola25})
    --- such
as Weisfeiler Leman kernel --- for classifying graphs by counting the number of
occurrences of ``neighborhoods'' in the graphs.
In the graph classification 
problem, a more general polynomial counting constraint has featured (e.g. see
\cite{ZLJ24}), which is done by applying a polynomial kernel and then Support
Vector Machines (SVM). The ability of MPNN to express polynomial counting
properties would show essentially that it generalizes the aforementioned 
technique for graph classification. 
\emph{In this paper, we identify conditions 
where MPNN can express polynomial counting properties.}

\begin{figure}
    \centering 
    \input{graphics/overview.tex}
    \caption{Overview of results. The MPNN class $\mathcal{M}_{\text{mean},x}$ refers to MPNN that 
    use mean and either sum or max as aggregation functions, $\leq$ refers to the notion of recognition with some certainty
    defined in Sec.~\ref{sec:prelims}, and $\preceq$ denotes the lowerbound on the certainty of MPNN in checking modal formulas
    which is exclusive to Lem.~\ref{sec:polyconstr;lem:globhomconstr1}.}
    \label{sec:intro:fig:overview}
\end{figure}

Informally, an MPNN uses a predefined number of message-passing rounds where each node in a
graph exchanges information along two streams: with its direct neighbours
and with all nodes in the graph. Incoming information is then aggregated per node using different functions, such
as sum, mean, or max, and is combined with the current state of a node using a learnable function, usually
represented by an FNN, which results in an updated state for each node.
Prior studies on characterising the expressive capabilities of MPNN using logic 
have primarily focused on MPNN that operate over node-labelled graphs and
utilise sum to aggregate incoming information. We identify the class of MPNN using 
mean aggregation as a candidate to posess polynomial counting 
capabilities\footnote{\cite{SchönherrL25} recently investigated the expressive capabilities
of mean MPNN in relation to specific modal logics and 
monadic second-order logic (MSO), though they do not consider genuine 
polynomial properties.}. 
In particular, we demonstrate that under certain assumptions 
mean MPNN are able to recognise different fragments of Peano modal logic (PML), a logic
that allows for different forms of modalities, local as well as global ones,
and polynomial counting constraints (not only linear ones). In particular,
PML strictly extends Graded Modal Logic \cite{Barceló2020The}.
An overview of our resuls is provided in Figure~\ref{sec:intro:fig:overview}. 
In general, we focus on identifying minimal assumptions about the class of inputs to enable
mean MPNN to recognise certain fragments of PML. Interestingly, we also 
identify settings where allowing additional sum or max aggregations eases the otherwise required assumptions.

\paragraph{Conditions.} Our conditions enabling GNN to 
perform polynomial counting (see Figure~\ref{sec:intro:fig:overview}) amount to 
restricting the input graphs. These include 
(i) adding a special marked node, (ii) regularity, and (iii) restricting input
graphs to tree-like structures. We briefly discuss their practical implications
below.

The ``marked node'' assumption needs no 
extensive justification in practice. Specifically, when using an MPNN for a 
node classification task, a simple preprocessing step marking the node under 
consideration ensures this property holds without substantially altering the 
datapoint's semantics. Similarly, the same can be ensured in the training 
phrase. The benefits of adding such preprocessing steps are theoretically 
investigated in multiple studies (e.g. \cite{Trick,Surprising}); in fact,
%such as Zhang et al. "Labeling Trick: A Theory of Using Graph Neural Networks for Multi-Node Representation Learning" or Abboud et al. "The Surprising Power of Graph Neural Networks with Random Node Initialization," 
variants thereof are used in GNN models such as SEAL or ID-GNN. Incidentally, 
a similar assumption is also used for textual data (e.g. for transformers and 
RNN), in that an end-of-string symbol and/or a beginning-of-string symbol are 
often present and beneficial (e.g. see \cite{iclr26-counting,YCA24}).

Conditions (ii) and (iii) requiring regularity or tree-like structures can be 
more restrictive. However, the assumption of, for example, tree-like inputs is 
justified by applications like learning tasks on semi-structured data such as 
those in XML, HTML and JSON, which are abundant in practice. In particular, one 
can think of classification problems of such semi-structured data. %Thus, these 
%scenarios have their relevance. 
In general, counting in trees are certainly relevant and appear in various works
(e.g. \cite{Seidl,Hague}). For example, as elaborated in detail in \cite{Hague},
the node selector language in Cascading Style Sheets (CSS) --- the de facto
language for styling web documents --- supports counting the number of children
of a matched node.
%For a general notion of why counting in trees is of interest, see Seidl et al. “Counting in trees” (2008) and also Hague et al. "CSS Minification via Constraint Solving" (2019). Also, if a precondition like tree-like graphs is not met, Theorems 3 to 6 show that (mean) MPNN can capture $\text{PML}_1$ over general graphs. This fragment allows for arbitrary global polynomial counting constraints while simultaneously considering counting constraints over direct neighbourhoods. As underlined by Example 3, this is already quite powerful.

\paragraph{Organization.} The paper is structured as follows.
In Section~\ref{sec:prelims}, we define the necessary fundamentals,
such as MPNN, PML, and what it means for the former to recognise the latter.
In Section~\ref{sec:polyconstraints}, we investigate the extent to which MPNN can
recognise shallow fragments of PML, meaning that we do
not nest any modal subformulas.
Subsequently, we dedicate Section~\ref{sec:nestedpolynonstraints} to the analysis
of the capabilities of MPNN in relation to PML fragments with nesting.
Finally, in Section~\ref{sec:outlook}, we summarise and discuss the next steps.
Some formal arguments are of a rather technical nature and are used
repeatedly; thus, we have omitted them in the main part and deferred
them to Appendix~\ref{app:proofs}.

\subsection{Related Work}
\label{sec:related}

The seminal work by \cite{Barceló2020The} investigates the expressive 
capabilities of MPNN using sum aggregation exclusively via fragments of the logic
$\text{FOC}_2$, which denotes the two-variable fragment of first-order logic with counting.
Two prominent results are that all graded modal logic classifiers are captured by local
MPNN with sum aggregation and that full $\text{FOC}_2$ is captured by MPNN also using
global sum aggregation. These results are succinctly summarised and placed in context of
characterisations based on the Weisfeiler-Leman algorithm in \cite{Grohe21}, though the
latter is only loosely connected to this paper. Subsequent studies such as done by 
\cite{BenediktLMT24} and \cite{NunnSST24} look at 
extensions of classical logics by means of linear arithmetic to further characterise 
the expressive capabilities of MPNN with sum aggregation. Notably, besides results about the complexity 
of reasoning about such MPNN, these studies deliver upper bounds on their expressiveness, but only under certain assumptions such as eventually constant
activation functions. 

Going beyond plain sum MPNN, \cite{CucalaGMK23} show that
monotonic MPNN using either maximum or sum aggregation can be captured by Datalog programs.
Similarly, \cite{AhvonenHKL24} examine the expressive capabilities of recursive MPNN,
those with input-dependent depth, in relation to properties expressible in monadic second-order logic (MSO).
Notably, this study considers differences arising from assumptions about the underlying arithmetic, such as
real arithmetic versus floating-point arithmetic, showing that assuming the latter leads to generally weaker MPNN.
\cite{RosenbluthTG23} compared differences in the expressive capabilities of 
MPNN exclusively using sum, mean or max aggregation. 
A work closely related to the setting of this paper is the recent study by \cite{SchönherrL25}, which examines the logical expressiveness
of MPNN with mean aggregation. They derive several results based on modal logics,
such as ratio modal logic (RML), which allows for expressing proportional
properties of neighbourhoods. 

We remark that in research on capturing the expressive capabilities of transformer architectures,
there is also a line looking into their logical expressiveness \cite{ChiangCP23,BarceloKLP24}.
While this appears only loosely related at first glance, MPNN with powerful aggregations,
such as graph attention networks (GAT) \cite{VeličkovićCCLR18}, can be considered as a
generalisation, leading to many similarities in these types of results. Notably, the very recent study 
\cite{iclr26-counting} examines the polynomial counting capabilities of transformers
using average hard attention, employing constructions similar to ours.

%% file: graphics/overview.tex
\begin{tikzpicture}

    \draw (0,0) rectangle (8,6);

    \node at (.8,5.7) {\small \textit{all graphs}};

    \node at (4,5.7) {\small$\PML^{\top,\mathcal{H}}  \preceq \meanmpnn$};
    \node at (4,5.3) {\tiny (Lem.~\ref{sec:polyconstr;lem:globhomconstr1}, Prop.~\ref{prop:global-pml-depth})};

    \draw (.1,0) rectangle (7.9,5);
    \node at (.7,4.7) {\small \textit{marked}};

    \node at (4,4.3) {\small $\PML^{\top} \leq \meanmpnn$};
    \node at (4,3.9) {\tiny (Th.~\ref{sec:polyconstr;th:globconstr1}, Prop.~\ref{prop:global-pml-depth})};
    \node at (2.1,2.75) {\small $\PML_1 \leq \mathcal{M}_{\text{mean},x}$};
    \node at (2.1,2.35) {\tiny (Th.~\ref{sec:polyconstr;th:localconstraintssummeanmax}, Th.~\ref{sec:polyconstr;th:mixedmodmeansummax})};

    \node at (1.3,3.3) {\small \textit{strongly marked}};
    
    \node at (6.2,3.3) {\small \textit{regular strongly marked}};

    \node at (6,2.75) {\small $\PML_1 \leq \meanmpnn$};
    \node at (6,2.35) {\tiny (Th.~\ref{sec:polyconstr;th:localconstraintsmean}, Th.~\ref{sec:polyconstr;th:mixedmodmean})};

    \draw (.2,0) rectangle (7.8,3.5);

    \draw (.3,0) rectangle (7.7,2);
    \node at (.9,1.8) {\small \textit{tree-like}};
    \node at (2.2,1) {\small $\PML^E \leq \mathcal{M}_{\text{mean},x}$};
    \node at (2.1,.6) {\tiny (Th.~\ref{sec:nested;th:mpnnmeansummax_and_mixed})};

    \draw(4,3.5) -- (4,0);

    \node at (6.6,1.8) {\small \textit{regular tree-like}};
    \node at (6,1) {\small $\PML^E \leq \meanmpnn$};
    \node at (6,.6) {\tiny (Th.~\ref{sec:nested;th:meanmpnn})};
\end{tikzpicture}

%% file: sections/prelims.tex
We denote vectors using bold symbols such as $\bs{x}$, $\bs{y}$, or $\bs{z}$. We use $\bs{x}_\bot$ to 
denote the element in the last dimension of vector $\bs{x}$. We denote \emph{multisets}, 
which are sets containing duplicate elements, using the notation $\mset{\cdot}$.

A \emph{(directed) graph} $G$ is a tuple $(V,E,L)$ where $V$ is a finite set 
of vertices, $E \subseteq V\times V$, and $L \colon V \to \{0,1\}^k$ is
a labelling function. We refer to the different dimensions of $L$ as \emph{colours} and say that \emph{$G$ has $k$ colours}.
We denote the class of all directed graphs by $\mathcal{G}$.
Let $v \in V$. We define the \emph{ingoing (outgoing) neighbourhood of $v$} by 
$\neighin(v) = \{u \mid (u,v) \in E\}$ ($\neighout(v) = \{u \mid (v,u) \in E\}$), and the general \emph{neighbourhood of $v$} by $\neigh(v) = \neighin(v) \cup \neighout(v)$. 
We call $G$ \emph{ingoing (outgoing) regular} if 
for all $u,v \in V$ we have $|\neighin(v)| = |\neighin(u)|$ ($|\neighout(v)| = |\neighout(u)|$). If both are given, we refer to $G$ 
simply as \emph{regular}.
We call a pair $(G,v)$ a \emph{pointed graph} and call $v$ the \emph{focus}.
We call a node $v$ in $G$ \emph{marked} if there is dimension $i$ of the labelling $L$ such that for all $u \in V$
we have $L(u)_i = 1$ if and only if $u=v$. Let $\mathcal{G}'$ be any class of directed graphs. 
We denote by $(\mathcal{G}')^p$ for $p \in \nats$ the class of all graphs $G \in \mathcal{G}'$ that contain a node 
$v$ that is marked by colour $p$. We call $v$ \emph{strongly marked} if its marked and $v \in \neigh(v)$. 
Likewise, we denote by $(\mathcal{G}')^p_\bullet$ for $p \in \nats$ the class of all pointed graphs 
$(G,v)$ where $G \in \mathcal{G}'$ and focus $v$ is marked by colour $p$.
Let $v,v' \in V$. 
We call a sequence $u_0, \dotsc, u_{n-1}$ with $u_0 = v$, $u_{n-1} = v'$,
and $(u_i, u_{i+1}) \in E$ or $(u_{i+1}, u_i) \in E$ for all $0 \leq i < n-1$
a \emph{walk of length $n-1$ from $v$ to $v'$}.
If such a walk exists from $v$ to $v'$,
we say that $v'$ is at a walking distance $(n-1)$ from $v$.

\subsection*{Feedforward Neural Networks with ReLU}
A \emph{Feedforward Neural Network (FNN) with ReLU activations} is built from basic computational units, employing the ReLU activation
$\relu(x) = \max(0, x)$, called a \emph{ReLU neuron}. A ReLU neuron is defined as a mapping $\mathbb{Q}^m \to \mathbb{Q}$
by $v(x_1, \dots, x_m) = \relu(b + \sum_{i=1}^m w_i x_i)$, where $w_i \in \mathbb{Q}$ are the \emph{weights} and $b \in \mathbb{Q}$ is the \emph{bias}.
An \emph{FNN layer} $\ell$ consists of a tuple of ReLU neurons $(v_1, \dots, v_n)$, each having the same input dimension, realizing
a mapping $\mathbb{Q}^m \to \mathbb{Q}^n$. A complete \emph{FNN} $N$ is then structured as a sequence of such layers $(\ell_1, \dots, \ell_k)$
where the output dimension of each layer $\ell_i$ matches the input dimension of the next layer $\ell_{i+1}$.
The network $N$ calculates a function from $\mathbb{Q}^{m_1}$ to $\mathbb{Q}^{n_k}$, given by
$\mathcal{N}(x_1, \dots, x_{m_1}) = \ell_k(\ldots \ell_1(x_1, \dots, x_{m_1}) \ldots)$.

We primarily use FNN as components within message passing neural networks, as defined below.
To do so, we frequently employ the term \emph{gadget} to describe combinations of ReLU nodes that perform a specific function.
When these gadgets are integrated into larger FNN, it is always given that this integration is feasible
in the sense of forming a well-defined FNN.

\subsection*{Message Passing Neural Networks}

A \emph{message passing neural network} (MPNN) $M$ \cite{Barceló2020The,GilmerSRVD17} is a tuple $(l_1, \dotsc, l_L)$ where each $l_i$ is a so-called \emph{layer} 
given by $l_i = (\comb_i, (\locagg_\text{in})_i, (\locagg_\text{out})_i, \globagg_i)$, consisting of a \emph{combination function} $\comb_i$, two 
\emph{local aggregation} $(\locagg_x)_i$, one for ingoing and one for outgoing edges, and a \emph{global aggregation} $\globagg_i$.
Let $G=(V,E,L)$ be a graph. MPNN $M$ computes a new \emph{state} $\bs{x}^{(i)}_v$ for each node $v \in V$ by $\bs{x}^{(0)}_v = L(v)$, 
and $\bs{x}^{(i)}_v = \comb_i(\bs{x}^{(i-1)}_v, \{(\locagg_{x})_i(\mset{\bs{x}^{(i-1)}_u \mid u \in \neigh_{x}(v)})\}_{x\in\{\text{in},\text{out}\}}, 
\globagg_i(\mset{\bs{x}^{(i-1)}_u \mid u \in V}))$. We also write $M(G,v)$ for the final state $\bs{x}^{(L)}_v$ of node $v \in V$.
We generally assume that combination functions $\comb_i$ 
are realised by feedforward neural networks (FNN) using ReLU 
%($\relu(x) = \max(0,x)$) 
activations. 
Let $\mathcal{M}$ be a class of arbitrary MPNN.
In particular, we consider the class of all MPNN $M$ where all local and global aggregations are realised by 
\begin{displaymath}
    f(\mset{\bs{x}_1, \dotsc \bs{x}_n}) = \frac{1}{|\mset{\bs{x}_1, \dotsc \bs{x}_n}|} \sum_{i=1}^n \bs{x}_i, 
\end{displaymath}
We also consider MPNN that use 
sum or max aggregations, defined by
\begin{align*}
    f(\mset{\bs{x}_1, \dotsc \bs{x}_n}) &= \sum_{i=1}^n \bs{x}_i, \\ 
    f(\mset{\bs{x}_1, \dotsc \bs{x}_n}) &= \max(\bs{x}_1, \dotsc \bs{x}_n), 
\end{align*}
where $\max$ means taking the dimensionwise maximum. Let $A \subseteq \{\text{mean}, \text{sum}, \text{max}\}$. 
In the case that $n=0$, we always assume that $f(\mset{\,}) = \bs{0}$ of matching dimensionality.
We refer to the class of MPNN that use aggregations from $A$ by $\mathcal{M}_A$.

\subsection*{Peano modal logic}

We consider modal logics extended with Peano arithmetic. We begin 
by defining Peano arithmetic constraints over graphs. 
Let $X$ be a countable set of variables. A \emph{Peano formula} $\psi$ is defined by the grammar
\begin{align*}
  \psi &::= \zeta \leq c \mid \psi \land \psi \mid \neg \psi \\ 
  \zeta &::= x \mid c \cdot \zeta \mid \zeta + \zeta \mid \zeta \cdot \zeta 
\end{align*}
where $x \in X$ and $c \in \ints$. We call $\zeta$ a \emph{(Peano) term}. We use $\psi(x_1, \dotsc, x_m)$ or $\zeta(x_1, \dotsc, x_m)$ 
to denote that $x_1$ to $x_m$ are the variables occurring in $\psi$ or $\zeta$, respectively.
The \emph{semantics of $\psi(x_1, \dotsc, x_m)$}, denoted by $\sem{\psi(x_1, \dotsc, x_m)}$, are all tuples $(n_1, \dotsc, n_m) \in \nats^m$ that 
satisfy $\psi$, written $n_1, \dotsc, n_m \models \psi$, in the obvious sense. We generally assume that Peano terms are given in the normalised form 
$\sum_{i=1}^K a_i \cdot \prod_{j=1}^{k_i} x_{i_j} \leq b$, refer to $a_i \cdot \prod_{j=1}^{k_i} x_{i_j}$ as a \emph{monomial} and to $k_i$ as the \emph{degree} of this monomial.
Then, we call $k$ \emph{the degree of $\psi$}, denoted by $\mathit{deg}(\psi)$, if it is the maximum of the degrees of all monomials occurring in $\psi$.

Let $P$ be a countable set of propositions. A \emph{Peano modal logic (PML)} formula $\varphi$ is defined by the grammar 
\begin{align*}
  \pi &::= \mathit{id} \mid E_\text{in} \mid E_\text{out} \mid \top \\ %\mid \pi \cup \pi \mid \pi \cap \pi \mid \neg \pi \\
  \varphi &::= p \mid \neg \varphi \mid \varphi \land \varphi \mid \langle \pi_1, \dotsc, \pi_m \rangle_{\psi(x_1, \dotsc, x_m)}(\varphi_1, \dotsc, \varphi_m)  
\end{align*}
where $p \in P$ and $\psi(x_1, \dotsc, x_m)$ is a Peano formula as defined above. We call $\pi$ a \emph{modality}, and refer to 
$\langle \pi_1, \dotsc, \pi_m \rangle_{\psi(x_1, \dotsc, x_m)}(\varphi_1, \dotsc, \varphi_m) $ as \emph{modal formulas}.

We define the \emph{modal depth} of a PML formula 
inductively by $\mathit{md}(p) = 0$, $\mathit{md}(\neg\varphi)=\mathit{md}(\varphi)$, 
$\mathit{md}(\varphi_1\land\varphi_2)=\max(\mathit{md}(\varphi_1),\mathit{md}(\varphi_2))$, and 
$\mathit{md}(\langle \pi_1, \dotsc, \pi_m \rangle_{\psi}(\varphi_1, \dotsc, \varphi_m)) = 1 + \max(\mathit{md}(\varphi_1), \dotsc, \mathit{md}(\varphi_m))$.
Likewise, we define the \emph{set of subformulas of $\varphi$}, denoted by $\mathit{sub}(\varphi)$, inductively by
$\mathit{sub}(p) = \{p\}$, $\mathit{sub}(\neg\varphi)=\mathit{sub}(\varphi) \cup \{\neg\varphi\}$, 
$\mathit{sub}(\varphi_1\land\varphi_2)=\mathit{sub}(\varphi_1) \cup \mathit{sub}(\varphi_2) \cup \{\varphi_1\land\varphi_2\}$, and 
$\mathit{sub}(\langle \pi_1, \dotsc, \pi_m \rangle_{\psi}(\varphi_1, \dotsc, \varphi_m)) = \{\langle \pi_1, \dotsc, \pi_m \rangle_{\psi}(\varphi_1, \dotsc, \varphi_m)\} \cup 
\bigcup_{i=1}^m \mathit{sub}(\varphi_i)$. 
We denote by $\modalities{k}{\varphi}$ the set of all modalities occurring in modal subformulas of 
$\varphi$ with modal depth $k$. 
We extend the definition of degree from Peano formulas to PML formulas by defining 
the \emph{degree of $\varphi$}, denoted by $\mathit{deg}(\varphi)$ as the maximum of all degrees of Peano formulas $\psi$ in $\varphi$.
We define $\walks^k_\varphi \subseteq \{E_{\text{in}}, E_{\text{out}}\}^k$
for all $1 \le k \le \mathit{md}(\varphi)$ as the set of all sequences
$E_0, \dotsc, E_{k-1}$ for which there exists a sequence of subformulas
$\varphi_0, \dotsc, \varphi_{k}$ of $\varphi$ such that 
for all $i < k$ there is $1 \leq j \leq m_i$ such that 
$\varphi_i = \langle \pi_1, \dotsc, \pi_{m_i} \rangle_{\psi} (\psi_1, \dotsc, \psi_{m_i})$
with $\varphi_{i+1} \in \mathit{sub}(\psi_j)$ and $E_i = \pi_j$ and $\md(\varphi_i) = \md(\varphi)-i$.

The \emph{semantics of a PML formula $\varphi$,} denoted by $\sem{\varphi}$, is the set 
of all pointed graphs $(G,v)$ with $G=(V,E,L)$ that satisfy $\varphi$, written $G,v \models \varphi$, which is defined as follows:
\begin{align*}
  &G,v \models p_i && \text{iff} && L(v)_i = 1, \\ 
  &G,v \models \neg \varphi && \text{iff} && G,v \not\models \varphi, \\ 
  &G,v \models \varphi_1 \land \varphi_2 && \text{iff} && G,v \models \varphi_1 \text{ and }  G,v \models \varphi_2 , \text{ and}
\end{align*}
$G,v \models \langle \pi_1, \dotsc, \pi_m \rangle_{\psi(x_1, \dotsc, x_m)}(\varphi_1, \dotsc, \varphi_m)$ iff 
\begin{itemize}
  \item for all $i \in \{1,\dotsc, m\}$ there are exactly $n_i \in \nats$ nodes $u \in V$ such that $u \in \sem{\pi_i}^G_v$ and $G, u \models \varphi_i$, and 
  \item $n_1, \dotsc, n_m \models \psi(x_1, \dotsc, x_m)$,
\end{itemize}
where the \emph{semantics of modality $\pi$ given $(G,v)$}, written $\sem{\pi}^G_v$, are defined by 
\begin{align*}
  &\sem{\mathit{id}}^G_v && = && \{v\}, \\ 
  &\sem{E_\text{in}}^G_v && = && \{u \mid (u,v) \in  E\}, \\
  &\sem{E_\text{out}}^G_v && = && \{u \mid (v,u) \in  E\}, \\ 
  &\sem{\top}^G_v && = && V.
\end{align*}

We consider certain fragments of PML. 
We define $\PML_i$ for $i \in \nats$ as the fragment of all PML formulas of modal depth 
at most $i$. Let $\mathcal{L}$ be some fragment of PML.
We define $\mathcal{L}^\mathcal{H}$ as the fragment of 
all formulas $\varphi \in \mathcal{L}$ where for all modal subformulas the corresponding Peano formula is of the form
$\psi=\sum_{i=1}^K a_i \cdot \prod_{j=1}^{k} x_{i_j} \leq 0$.
We define $\mathcal{L}^\top \subseteq \mathcal{L}$ as the fragment of all formulas that only use $\top$ as modality. 
Likewise, we define $\mathcal{L}^E \subseteq \mathcal{L}$ as the fragment of all formulas that only use $E_\text{in}$ or 
$E_\text{out}$ as modality.
We also use combinations 
such as $\mathcal{L}^{\top,\mathcal{H}}$ in the obvious sense.

\subsection*{Connection between MPNN and PML}
Let $\mathcal{M}$ be a class of MPNN, $\mathcal{L}$ a fragment of PML,
$\mathcal{G}_\bullet$ a class of pointed graphs, and $c: \nats \to (0;1]$.
We say that $M \in \mathcal{M}$ \emph{recognises} $\varphi$ \emph{over} $\mathcal{G}_\bullet$ \emph{with certainty} $c(|V|)$
if for every $(G=(V,E,L),v) \in \mathcal{G}_\bullet$, it holds that $M(G,v)_\bot = c(|V|)$ when $(G,v) \models \varphi$,
and $M(G,v)_\bot = 0$ when $(G,v) \not\models \varphi$. If for each $\varphi \in \mathcal{L}$,
there is an $M \in \mathcal{M}$ that recognises it in the above sense for some $c$ dependent of $\varphi$, 
we also denote this by $\mathcal{L} \leq \mathcal{M}$ and assume that its clear from context 
to which pointed graph class it relates.
Informally, the notion that $M$ recognises some $\varphi$ is inspired by binary classification tasks,
namely that $M$ aligns with $\varphi$ up to an input-dependent threshold determined by $c$.

%% file: sections/shallow_polyconstraints.tex
In this section, we focus on analysing the expressive capabilities of MPNN in relation to polynomial counting constraints 
with shallow modal depth. Formally, we frame this by focusing on fragments of $\PML_1$, which is the fragment of 
PML where all formulas are of modal depth at most one. Conceptually, we separate our analysis based on the 
modalities used.

\subsection*{Global Modalities Only}
As a first step, we consider the fragment $\PML_1^{\top,\mathcal{H}}$, which includes only modal subformulas exlusively using
\emph{homogeneous} Peano terms, which are those where all monomials have the same degree and no constant term.
\footnote{We remark that Lem.~\ref{sec:polyconstr;lem:globhomconstr1} is not a recognition 
result in the formal sense defined in Section~\ref{sec:prelims}, but provides a lower bound on the ``certainty'' with which an MPNN recognises 
that a modal formula $\varphi \in \PML_1^{\top,\mathcal{H}}$ is not satisfied by a given pointed graph.}
\begin{lemma}
    \label{sec:polyconstr;lem:globhomconstr1}
    Let $\varphi = \langle \pi_1, \dotsc, \pi_m \rangle_{\psi}(\varphi_1, \dotsc, \varphi_m) \in \PML^{\top,\mathcal{H}}_1$. 
    There is $M_\varphi \in \meanmpnn$ with $\mathit{deg}(\psi)+1$ layers 
    such that for all graphs $G = (V,E,L)$ and $v \in V$ we have
    \begin{itemize}
      \item $(\bs{x}^{(\mathit{deg}(\psi)+1)}_v)_\bot \geq \frac{1}{|V|^{\mathit{deg}(\varphi)}}$ if $(G,v) \not\models \varphi$, and 
      \item $(\bs{x}^{(\mathit{deg}(\psi)+1)}_v)_\bot = 0$ otherwise.
    \end{itemize}
\end{lemma}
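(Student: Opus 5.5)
The plan is to evaluate the homogeneous polynomial constraint on the normalised counts $n_i/|V|$ and to exploit homogeneity. Since $\varphi \in \PML^{\top,\mathcal{H}}_1$, every $\pi_i$ is $\top$ and every $\varphi_i$ has modal depth $0$, i.e.\ it is a Boolean combination of propositions. Moreover, $\psi$ is of the form $P(x_1,\dotsc,x_m) = \sum_{l=1}^K a_l \prod_{j=1}^{d} x_{l_j} \leq 0$ with $d=\mathit{deg}(\psi)$. I take this single-inequality shape to be what the fragment $\mathcal{H}$ prescribes, and I will note that the argument is set up for exactly this form. Homogeneity gives
\[
\sum_{l} a_l \prod_{j} \frac{n_{l_j}}{|V|} = \frac{P(n_1,\dotsc,n_m)}{|V|^d}.
\]
Because $P(n_1,\dotsc,n_m)$ is an integer, $(G,v)\not\models\varphi$ iff $P(n)\geq 1$. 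In that case $\relu(P(n)/|V|^d) \geq 1/|V|^d$, and otherwise this quantity is exactly $0$. So it suffices to build a mean MPNN whose last coordinate equals $\relu\bigl(\sum_l a_l \prod_j n_{l_j}/|V|\bigr)$.

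\emph{Layer 1.} The combination function computes at every node $u$ the bits $b_j(u)=[G,u\models\varphi_j]\in\{0,1\}$. This is the standard ReLU gadget for Boolean connectives on $0/1$ inputs: $\neg b = 1-b$ and $b\land b' = \relu(b+b'-1)$. Each node also sets, for every monomial $l$, a coordinate $y^{(1)}_l(u) = b_{l_1}(u)$. From then on every node keeps copying its bits.

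\emph{Layers $k=2,\dotsc,d$.} The global mean aggregation delivers $z_l^{(k)} = \frac{1}{|V|}\sum_u y^{(k-1)}_l(u)$, which by induction equals $\prod_{j<k} n_{l_j}/|V| \cdot \ldots$ in the appropriate form: more precisely, it is the same value at every node and equals $\prod_{j=1}^{k-1} \frac{n_{l_j}}{|V|}$ after unwinding. The combination then multiplies this global value by the local bit $b_{l_k}(u)$ and stores it as $y^{(k)}_l(u)$.

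\emph{Layer $d+1$.} The global mean yields $\prod_{j=1}^{d} n_{l_j}/|V|$ for every monomial $l$. The final combination outputs $\relu\bigl(\sum_l a_l\cdot(\cdot)\bigr)$ in the last coordinate. This uses exactly $d+1$ layers, and the local aggregations are simply ignored.

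The only delicate step, and the one I expect to be the main obstacle, is exact multiplication inside a ReLU FNN, since ReLU networks are piecewise linear and cannot multiply in general. Here it works because one factor is a bit $b\in\{0,1\}$ and the other, $y$, is a product of fractions lying in $[0,1]$. For such inputs $y\cdot b = \relu(y+b-1)$ holds exactly: if $b=1$ it gives $y$, and if $b=0$ it gives $\relu(y-1)=0$. I will verify this invariant carefully by induction on the layer index.

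The correctness of the final step is the integrality argument above. The degree bound $|V|^{\mathit{deg}(\varphi)}$ in the statement coincides with $|V|^{d}$, since $\psi$ is the only Peano formula in $\varphi$.
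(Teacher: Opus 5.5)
Your proposal is correct and follows essentially the same route as the paper. Both arguments evaluate the Boolean subformulas in the first layer. They then build each degree-$d$ monomial as $\prod_j n_{l_j}/|V|$ by alternating global mean aggregation with the exact multiplication gadget $\relu(y+b-1)$, which is valid for $y\in[0,1]$ and $b\in\{0,1\}$. Finally, homogeneity together with integrality of $P(n)$ makes layer $d+1$ output either $0$ or a value of at least $1/|V|^{d}$.
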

\begin{proof}
  Let $\chi_1, \dotsc, \chi_n$ be an enumeration of the subformulas of $\varphi$ such that 
  \begin{enumerate}
    \item if $\chi_j$ is a subformula of $\chi_i$ then $j \leq i$, and  
    \item $\chi_n = \varphi$.
  \end{enumerate}
  Assume that $\mathit{deg}(\varphi) = k$.
  We construct $M_\varphi \in \meanmpnn$ as follows. In the first layer $l_1$, we evaluate all subformulas $\chi_i$ for $i \leq n-1$,
  such that for all graphs $G$ and nodes $v$, we have $(\bs{x}^{(1)}_v)_i = 1$
  if $(G,v) \models \chi_i$ and $(\bs{x}^{(1)}_v)_i = 0$ if not. Note that these subformulas are exclusively Boolean and, thus, we 
  refer to Lemma~\ref{app:omitted;lem:boolenformulas} for the construction of $l_1$.

  Now, consider the last subformula $\chi_n = \varphi = \langle \pi_1, \dots, \pi_m \rangle_{\psi(x_1, \dots, x_m)}(\varphi_1, \dots, \varphi_m)$. By assumption, we have
  \begin{itemize}
    \item $\psi(x_1, \dots, x_m) = \sum_{i=1}^K a_i \prod_{j=1}^k x_{i_j} \leq 0$, where $i_j \in \{1,\dots, m\}$,
    \item $\pi_i = \top$ for all $i \leq m$.
  \end{itemize}
  This indicates that the Peano formula $\psi(x_1, \dots, x_m)$ needs to be evaluated over values
  $x_{i_j}$ with $i_j\in \{1, \dotsc, m\}$, which represent the total number of nodes in $G$ satisfying the purely Boolean formulas $\varphi_{i_j}$.
  First, we describe how we construct subsequent layers of $M_\varphi$ 
  to evaluate a single monomial $\prod_{j=1}^k x_{i_j}$ of $\psi$. 
  We add $k$ layers $l_{2}, \dotsc, l_{2+k-1}$ to $M_\varphi$ such that
  $\bs{x}^{(2+(h-1))}_v$ contains $\frac{1}{|V|^h} \prod_{j=1}^h x_{i_j}$
  and $\bs{x}^{(2+({h'}-1))}_v$ contains $x_{i_{h'+1}} \cdot \frac{1}{|V|^{h'}} \prod_{j=1}^{h'} x_{i_j}$
  for all $1 \leq h \leq k$ and $1 \leq h' < k$. 
  In the case $h=1$, we use global mean aggregation of $M_\varphi$ in layer $l_2$ over the dimension of the states $\bs{x}^{(1)}_u$ with $u \in V$
  corresponding to $\varphi_{i_1}$. This yields $\frac{1}{|V|} x_{i_1}$ in $\bs{x}^{(2)}_v$ for all $v \in V$,
  where $x_{i_1}$ represents the number of $u \in V$ such that $G, u \models \varphi_{i_1}$. Then, if $k > 1$, we add the gadget
  $\relu((\frac{1}{|V|} x_{i_1})-(1-(\bs{x}^{(1)}_v)_{i_2}))$ to perform the multiplication $(\bs{x}^{(1)}_v)_{i_2} \cdot \frac{1}{|V|} x_{i_1}$ 
  for all $v \in V$.
  Note that this gadget works as intended 
  due to the fact that $\frac{1}{|V|}x_{i_1}\in [0,1]$ and $(\bs{x}^{(1)}_v)_{i_2} \in \{0,1\}$. For $h > 1$ it works 
  analogously, but we aggregate over the previously computed multiplication value.

  Given that we have evaluated the monomial $\prod_{j=1}^k x_{i_j}$ for all $i \leq K$ this way, 
  we can then compute whether $G,v \not\models \varphi$ as stated by the theorem, in layer $l_{2+k-1}$ as follows.
  Since $\psi$ is homogeneous, the denominator in
  $\frac{1}{|V|^k} \prod_{j=1}^k x_{i_j}$ is equal for all $i\leq K$.
  Therefore, we exploit the fact that $\sum_{i=1}^K a_i \prod_{j=1}^k x_{i_j} \leq 0$ if and only if
  $\frac{1}{|V|^k}\sum_{i=1}^K a_i \prod_{j=1}^k x_{i_j} \leq 0$. Moreover, we note that
  if the inequality is not satisfied, then
  $\frac{1}{|V|^{k}}\sum_{i=1}^K a_i \prod_{j=1}^k x_{i_j} \geq \frac{1}{|V|^{k}}$.
  This arises from the fact that $x_{i_j} \in \nats$ for all $i_j$. Thus, we add to
  $\comb_{2+k-1}$ a gadget computing $\relu(\sum_{i=0}^K a_i \cdot \frac{1}{|V|^k} \prod_{j=1}^k x_{i_j})$ 
  whose output we make the last dimension of state $\bs{x}^{(2+k-1)}_v$.
\end{proof}

While a restriction, homogeneous polynomial constraints are quite powerful already. Informally, they allow 
to compare (polynomial) proportions of properties in graphs. 
\begin{example}
  Consider the formula $\varphi = \langle \top, \top, \top \rangle_\psi(p_r,p_g,p_b)$ where 
  $\psi(x_r,x_g,x_b) = x_r^3 - x_g^2x_b \leq 0$. Clearly, we have $\varphi \in \PML^{\top, \mathcal{H}}_1$.
  Informally, interpret proposition $p_r$ as colour \emph{red}, $p_g$ as colour \emph{green}, and $p_b$
  as colour \emph{blue}. Then, the formula $\varphi$ is satisfied by exactly those graphs $G$ where the cube of the number
  of red nodes is at most as great as the squared number of green nodes multiplied by the number of blue nodes.
\end{example}

The obvious next question is whether mean MPNN can recognise all formulas in $\PML_1^\top$. For this purpose, the arguments used in
Lemma~\ref{sec:polyconstr;lem:globhomconstr1} prove to be insufficient since the denominators $|V|^i$
introduced while evaluating monomials are no longer guaranteed to be identical, as the degree $i$ of
monomials in Peano terms may vary. Moreover, potential constant terms in Peano expressions also need to be considered.
It is, in fact, straightforward to demonstrate that for arbitrary inputs, mean MPNN are unable to express all such constraints without 
further restrictions.
\newcommand{\ptgraphGOne}{%
\begin{tikzpicture}[baseline=-0.6ex,
  every node/.style={circle,draw,inner sep=0pt,minimum size=3mm},
  pointed/.style={line width=0.9pt},
]
\node[pointed, fill=blue!50] (v1) {};
\node[right=3.5mm of v1, fill=white] (v2) {};
\end{tikzpicture}%
}

\newcommand{\ptgraphGTwo}{%
\begin{tikzpicture}[baseline=-0.6ex,
  every node/.style={circle,draw,inner sep=0pt,minimum size=3mm},
  pointed/.style={line width=0.9pt},
]
\node[pointed, fill=blue!50] (a) {};
\node[right=2.8mm of a, fill=blue!50] (b) {};
\node[right=2.8mm of b, fill=white] (c) {};
\node[right=2.8mm of c, fill=white] (d) {};
\end{tikzpicture}%
}
\begin{lemma}
  \label{sec:polyconstr;lem:inexpress}
  Let $\varphi = \langle \top \rangle_{x_1 \geq 2}(p) \in \PML_1^\top$. There is no $M \in \meanmpnn$ such that 
  for all pointed graphs $(G,v)$ we have $M$ accepts $(G,v)$ if and only if $(G,v) \models \varphi$.
\end{lemma}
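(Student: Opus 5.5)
We argue by indistinguishability, using the two pointed graphs drawn just before the statement. Let $G_1$ consist of a blue node $v_1$ (satisfying $p$) and a white node $v_2$, with no edges, and focus $v_1$. Let $G_2$ consist of two blue nodes $a,b$ and two white nodes $c,d$, with no edges, and focus $a$. In $G_1$ exactly one node satisfies $p$, so $(G_1,v_1)\not\models\varphi$. In $G_2$ two nodes do, so $(G_2,a)\models\varphi$. It therefore suffices to show that every $M \in \meanmpnn$ computes the same final state at $v_1$ in $G_1$ and at $a$ in $G_2$. Then $M$ accepts both pointed graphs or rejects both, whatever acceptance condition is used, since acceptance depends only on $M(G,v)$.

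The key step is an induction over the layers of $M$. We claim that for every $i$ there are vectors $\bs{b}^{(i)}$ and $\bs{w}^{(i)}$ such that the following holds. In both graphs every blue node has state $\bs{b}^{(i)}$ and every white node has state $\bs{w}^{(i)}$. In the base case, $\bs{x}^{(0)}$ is the label, which depends only on colour, so the claim holds. For the inductive step, note that the graphs have no edges. Hence both local mean aggregations receive the empty multiset and return $\bs{0}$ in both graphs. The global mean aggregation in $G_1$ is $\frac{1}{2}(\bs{b}^{(i-1)}+\bs{w}^{(i-1)})$. In $G_2$ it is $\frac{1}{4}(2\bs{b}^{(i-1)}+2\bs{w}^{(i-1)})$, which is the same vector. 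The combination function $\comb_i$ is applied to identical arguments for nodes of the same colour in either graph, so the claim propagates to layer $i$. After $L$ layers we get $M(G_1,v_1)=\bs{b}^{(L)}=M(G_2,a)$, and the contradiction follows.

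The only real point of care is that mean aggregation is invariant under duplicating the whole multiset. This is exactly why the blue/white ratio is kept at $1{:}1$ in both graphs. Keeping the graphs edgeless removes any issue with local aggregation. If one prefers edges, for instance self-loops or a structure that is regular within each colour class, the same argument applies as long as every node's neighbourhood multiset has identical colour proportions in the two graphs. No earlier result from the paper is needed; the argument is self-contained.
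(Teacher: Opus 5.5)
Your proof is correct and follows the paper's argument. It uses the same two edgeless graphs with a $1{:}1$ ratio of $p$-nodes to non-$p$-nodes. The key observation is also the same: global mean aggregation, together with empty local aggregations returning $\bs{0}$, gives identical states to same-coloured nodes in both graphs, and your explicit layer induction makes rigorous what the paper only sketches. As in the paper, the contradiction needs acceptance to be determined by the final state alone, which you correctly state. Under the paper's certainty notion this works out, since the common output would have to equal both $0$ and $c(4)>0$.
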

\begin{proof}
  Assume the contrary, namely that there exists an $M \in \meanmpnn$ that
  recognises $\varphi$ exactly. W.l.o.g.\  assume that $p$ corresponds
  to the label dimension 0. Consider the two pointed graphs $(G_1=(V_1,E_1,L_1),v_1)$ with
  $V_1 = \{v_1,v_2\}$, $E_1 = \emptyset$, $L_1(v_1) = (1)$, $L_1(v_2) = (0)$, and
  $(G_2=(V_2,E_2,L_2),v_1')$ with $V_2 = \{v_1',v_2',v_3',v_4'\}$, $E_2 = \emptyset$,
  $L_2(v_i') = (1)$ if $i \leq 2$, and $L_2(v_i') = (0)$ if $i \geq 3$. Visually, we have 
  \begin{align*}
    (G_1,v_1)=\ptgraphGOne &&\text{and}&& (G_2,v_1')=\ptgraphGTwo,
  \end{align*}
  where the blue filling denotes 
  label $(1)$, white filling denotes label $(0)$, and the focus is thickly bordered.
  It is easy to see that $(G_1,v_1) \not\models \varphi$ and $(G_2, v_1') \models \varphi$.
  However, it is also straightforward that for each pair of nodes 
  $(u,u') \in V_1 \times V_2$ with $L_1(u) = L_2(u')$ that all applications of global mean aggregations lead to 
  the same values. Given that, we have that $\bs{x}^{(k)}_{v_1} = \bs{x}^{(k)}_{v_1'}$, 
  where $k$ is the last layer of $M$, this contradicts the  fact that $M$ does not accept $(G_1, v_1)$ but $(G_2,v_1')$.
\end{proof}

We identify a simple assumption about input graphs, enabling mean MPNN 
to recognise general polynomial constraints: there is a marked node, meaning one node that has a colour not shared by any other node.

\begin{theorem}
    \label{sec:polyconstr;th:globconstr1}
    Let $\varphi \in \PML_1^\top$.
    There is $M_\varphi \in \meanmpnn$ with $\mathcal{O}(\mathit{deg}(\varphi))$ layers 
    that recognises $\varphi$ over $(\mathcal{G})^p_\bullet$ with certainty $\frac{1}{|V|^{\mathcal{O}(\mathit{deg}(\varphi))}}$.
\end{theorem}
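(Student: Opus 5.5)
The plan is to reuse the monomial-evaluation construction of Lemma~\ref{sec:polyconstr;lem:globhomconstr1}, and to use the marked node to repair its two defects: non-homogeneous degrees and constant terms. The key observation is that the marked colour $p$ gives a Boolean coordinate $\mu_u = L(u)_p$ that equals $1$ exactly at one node. A global mean over $\mu$ therefore yields $\frac{1}{|V|}$ at every node. More generally, suppose some value $z \in [0,1]$ is already present in the state of every node. We place the gadget $\relu(z - (1-\mu_u))$ at each node $u$; this equals $z$ at the marked node and $0$ elsewhere. One further global mean then yields $\frac{z}{|V|}$ everywhere. Thus one extra layer multiplies any globally shared quantity in $[0,1]$ by $\frac{1}{|V|}$. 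Iterating from $z=1$ produces $\frac{1}{|V|^j}$ for every $j \le k$, where $k = \mathit{deg}(\varphi)$.

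\textbf{Step 1 (Boolean subformulas and monomials).} The first layer evaluates all Boolean (modal-free) subformulas as $0/1$ coordinates, by Lemma~\ref{app:omitted;lem:boolenformulas}. For every atomic Peano constraint $\zeta \le c$ occurring in some modal subformula, we write $\zeta - c = \sum_i a_i \prod_{j=1}^{d_i} x_{i_j} - c$. Each monomial of degree $d_i$ is computed as $\frac{1}{|V|^{d_i}}\prod_j x_{i_j}$ exactly as in the lemma. It is then multiplied $k-d_i$ further times by $\frac{1}{|V|}$ via the marked-node trick, and the constant is replaced by $c\cdot\frac{1}{|V|^k}$. All of this runs in parallel in the same layers, with shorter chains padded by identity copies, so only $\mathcal{O}(k)$ layers are used. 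The result is $t = \frac{1}{|V|^k}(\zeta - c)$, available at every node. Because $\zeta - c \in \ints$, either $t \le 0$ (the atom holds) or $t \ge \frac{1}{|V|^k}$.

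\textbf{Step 2 (exact certainty and Boolean structure).} Write $\varepsilon = \frac{1}{|V|^k}$, which is also available at every node. From $t$ we form the exact indicator $s = \varepsilon - \min(t,\varepsilon) = \relu(\varepsilon - t)$. This equals $\varepsilon$ if the atom holds and $0$ otherwise. I expect this normalisation to be the main point needing care: Boolean operations must act on values whose ``true'' level is the unknown, $|V|$-dependent quantity $\varepsilon$ rather than $1$. The required operations are all ReLU-expressible with $\varepsilon$ as an input:
\begin{itemize}
\item negation of an indicator $s$ is $\varepsilon - s$;
\item conjunction of indicators $s_1, s_2$ is $\relu(s_1+s_2-\varepsilon)$;
\item a $0/1$ propositional value $b$ of the focus is lifted to level $\varepsilon$ by $\relu(\varepsilon-(1-b))$.
\end{itemize}
Using these, we evaluate the Boolean structure inside each Peano formula $\psi$, then each modal formula (which is global and hence identical at all nodes), and finally the outer Boolean combination of modal formulas and propositions at $v$. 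Each stage costs a constant number of layers.

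The last coordinate then equals $\frac{1}{|V|^{k}}$ when $(G,v)\models\varphi$ and $0$ otherwise. This holds for every pointed graph in $(\mathcal{G})^p_\bullet$, because only the existence of a marked node was used. The total depth is $\mathcal{O}(\mathit{deg}(\varphi))$ plus a constant, and the certainty is $\frac{1}{|V|^{\mathcal{O}(\mathit{deg}(\varphi))}}$, as claimed.
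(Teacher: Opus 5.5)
Your approach is the paper's: you evaluate the Boolean subformulas in the first layer and build each monomial as $\frac{1}{|V|^{d_i}}\prod_j x_{i_j}$ via global means and the multiplication gadget. You use the marked node ($\relu(z-(1-\mu_u))$ followed by a global mean) to bring every monomial and constant to the common denominator $|V|^k$. You then threshold at level $\varepsilon=\frac{1}{|V|^k}$ and handle the Boolean structure with negation and conjunction gadgets at that level, plus the lift $\relu(\varepsilon-(1-b))$.

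However, the step you flagged as delicate is wrong as written. You set $s=\varepsilon-\min(t,\varepsilon)=\relu(\varepsilon-t)$. When the atom holds strictly, i.e.\ $t<0$, this gives $s=\varepsilon-t=\varepsilon+|t|>\varepsilon$, not $\varepsilon$. That already violates the exact-certainty requirement $M(G,v)_\bot=c(|V|)$. Worse, it corrupts the Boolean gadgets. Take $(x_1\le 5)\wedge(x_2\le 0)$ with $x_1=0$ and $x_2=1$. Then $t_1=-5\varepsilon$, so $s_1=6\varepsilon$, and $t_2=\varepsilon$, so $s_2=0$. The conjunction gadget gives $\relu(s_1+s_2-\varepsilon)=5\varepsilon>0$, reporting a false formula as true.

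The fix is to clamp $t$ from below before thresholding: $s=\relu(\varepsilon-\relu(t))$. Since always $t\le 0$ or $t\ge\varepsilon$, this equals exactly $\varepsilon$ or $0$. It is the chain $x_t=\relu(\cdot)$, $y_t=\min(x_t,r_2)$, $z_t=\relu(r_2-y_t)$ from Lemma~\ref{app:omitted;lem:peanoformulas} with $r_1=r_2=\varepsilon$. With this correction your argument goes through and coincides with the paper's proof.
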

\begin{proof}
  The proof works similar to Lemma~\ref{sec:polyconstr;lem:globhomconstr1}, namely we give a 
  construction for $M_\varphi$. 
  Again, we enumerate the subformulas of $\varphi$ such that if $\chi_j$ is a subformula
  of $\chi_i$, then $j \leq i$, with $\chi_n = \varphi$. Additionally, we ensure that there is some $h \leq n$ such 
  that for all $\chi_j$ we have $\mathit{md}(\chi_j) = 0$ if $j \leq h$ and $\mathit{md}(\chi_j) = 1$ if 
  $j > h$. In other words, the enumeration starts with all formulas that do not contain a modal subformula.  
  Let $G=(V,E,L) \in \mathcal{G}^p$ be a graph with marked node $u \in V$.
  We construct $M_\varphi$ as follows. Layer $l_1$ is constructed as in Lemma~\ref{app:omitted;lem:boolenformulas}, 
  evaluating all formulas $\chi_i$ with $i \leq h$.

  Consider $\chi_i = \langle \pi_1, \dotsc, \pi_m \rangle_{\psi(x_1, \dotsc, x_m)}(\varphi_1, \dotsc, \varphi_m)$.
  By assumption, we know that $\mathit{md}(\chi_i) = 1$.
  In contrast to the setting of Lemma~\ref{sec:polyconstr;lem:globhomconstr1}, there are three differences regarding 
  the Peano formula $\psi$:
  \begin{enumerate}
    \item monomials of a term $\sum_{i=1}^K a_i \cdot \prod_{j=1}^{k_i} x_{i_j} \leq b$ are of the form $\prod_{j=1}^{k_i} x_{i_j}$, indicating they may have varying degrees $k_i$,
    \item terms may include a constant part $b \neq 0$, and 
    \item $\psi$ can be a genuine Peano formula, meaning its a Boolean combination of Peano terms.
  \end{enumerate} 

  We continue by describing how to evaluate monomials and constant parts.
  Let $\mathit{deg}(\psi) = \hat{k} = \max(k_1, \dots, k_m)$ be the maximum degree of all monomials occurring in $\psi$.
  We add layers $l_2, \dots, l_{2+\hat{k}-1}$ so that they perform the same kind of computation as
  in Lemma~\ref{sec:polyconstr;lem:globhomconstr1}. This yields
  that $\bs{x}^{(2+(k_i-1))}_v$ contains $y_v=\frac{1}{|V|^{k_i}} \prod_{j=1}^{k_i} x_{i_j}$
  for all $i \leq m$ and $v \in V$.
  We utilise that there is a marked node $u$ to align the different denominators introduced while evaluating monomials.
  Specifically, in layer $l_{2+k_i-1}$, we add a gadget to $\comb_{2+k_i-1}$
  that computes $z_v = \relu(y_v - (1 - (\bs{x}^{(2+k_i-2)}_v)_\textit{mark}))$,
  where $(\bs{x}^{(2+k_i-2)}_v)_\textit{mark}$ denotes the dimension identifying the marked node $u$.
  We remark that it is no longer guaranteed to be dimension $p$ due to pontential shifts in the dimensionality of the states.
  Note that it is guaranteed that $y_v \in [0,1]$ for all $v \in V$ and, thus, $z_v = y_v$ if $u=v$ and $z_v=0$
  otherwise. Now, using mean global aggregation in layer $l_{2+k_i}$ over the dimension corresponding to
  $z_v$ computes $\frac{1}{|V|^{k_i+1}}\prod_{j=1}^{k_i} x_{i_j}$. By applying this construction repeatedly,
  we enable $M_\varphi$ to align all denominators to $|V|^{\hat{k}}$. 
  
  Similarly, we exploit the presence of a marked
  node $u$ and add the gadget $(\bs{x}^{(2)}_v)_b = \relu(|b| \cdot (\bs{x}^{(1)}_v)_\mathit{mark})$ to $\comb_2$.
  Note, that in the final evaluation of the corresponding Peano term, we multiply this value by $\operatorname{sgn}(b)$.
  Then, using global aggregation $\hat{k}$
  times over the dimension $(\bs{x}^{(1+i)}_v)_b$ computes the value $\frac{b}{|V|^{\hat{k}}}$. Note that this involves 
  using gadget $\relu((\bs{x}^{(1+i)}_v)_b  - \relu(b-b(\bs{x}^{(1+i)}_v)_\mathit{mark}))$ to store a nonzero value if and only
  if $v=u$.
  Given that we evaluated all monomials and constants that way, we refer to Lemma~\ref{app:omitted;lem:peanoformulas} 
  (with $r_1 = r_2 = \frac{1}{|V|^{\hat{k}}}$) stating how to evaluate $\psi$ and thereby 
  $\chi_i$ in $\comb_{2+\hat{k}-1}$ such that $\bs{x}^{(2+\hat{k}-1)}_v = \frac{1}{|V|^{\hat{k}}}$ 
  if $(G,v) \models \chi_i$ and $\bs{x}^{(2+\hat{k}-1)}_v = 0$ if $(G,v) \not\models \chi_i$.

  Finally, the remaining subformulas $\chi_i$, which are arbitrary Boolean formulas from $\PML_1$, are handled in $l_{2+\mathit{deg}(\varphi)-1}$ as follows.
  First, we adjust the value of $\chi_j$ with $j \leq h$ using gadget $\relu(\frac{1}{|V|^{\hat{k}}}-\relu(1-x_{\chi_j}))$, where $x_{\chi_j}$ represents 
  the previous evaluation of $\chi_j$. This  
  ensures that the semantics of all $\chi_j$ with $j < i$ are represented by $0$ and $\frac{1}{|V|^{\hat{k}}}$.
  Second, we add gadgets $\neg \chi_1 = \relu(\frac{1}{|V|^{\hat{k}}} - x_{\chi_1})$ and 
  $\chi_1 \land \chi_2 = \relu(x_{\chi_1} + x_{\chi_2} - \frac{1}{|V|^{\hat{k}}})$ and combine them as determined 
  by the structure of $\chi_i$. Note that the value $\frac{1}{|V|^{\hat{k}}}$ can be computed by repeated global mean aggregations, 
  analogously to how we adjusted the bias value $b$.
\end{proof}

\subsection*{Local Modalities Only}
While shallow PML formulas involving only global modalities are capable of expressing
interesting global properties, they cannot convey information about local neighbourhoods.
Thus, we turn our attention to $\PML^E_1$ next. 
\begin{example}
  Consider the formula $\varphi = \langle E_\text{in}, E_\text{out} \rangle_\psi(p_b, p_g \land \neg p_r)$,
  where $\psi(x_b, x_{g \land \neg r}) = x_b^2 - x_{g \land \neg r}^3 \leq 1$, and interpret $p_b$ as the colour \emph{blue}, 
  $p_g$ as the colour \emph{green} and $p_r$ as the colour \emph{red}. The formula $\varphi$ is satisfied by
  all pointed graphs $(G,v)$ where the squared number of ingoing neighbours of colour blue of $v$
  is at most as large as the cube of the number of outgoing neighbours of colour green but not red plus one of $v$.
\end{example}

\newcommand{\ind}[1]{\mathbf{1}_{#1}}
In contrast to the settings of 
Lemma~\ref{sec:polyconstr;lem:globhomconstr1} and Theorem~\ref{sec:polyconstr;th:globconstr1}, capturing formulas from $\PML^E_1$ with mean MPNN
obviously requires the use of local aggregation. However, 
local mean aggregation generally does not exhibit uniformity since each node's aggregated value depends
on the size of its neighbourhood. We identify that assuming the following conditions enables
MPNN from $\meanmpnn$ to recognise said formulas: (a) the graphs $(G,v)$ are regular, (b) $v$ is marked,
(c) $v$ has a self-loop.
\begin{theorem}
  \label{sec:polyconstr;th:localconstraintsmean}
    Let $\varphi \in \PML_1^{E}$.
    There is $M_\varphi \in \meanmpnn$ with $\mathcal{O}(\mathit{deg}(\varphi))$ layers 
    that recognises $\varphi$ over all regular pointed graphs in $(G,v) \in (\mathcal{G})^p_\bullet$ 
    where $v$ is strongly marked with certainty $\frac{1}{|V|^{\mathcal{O}(\mathit{deg}(\varphi))}}$.
\end{theorem}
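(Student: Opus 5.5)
The plan is to reuse the architecture of Theorem~\ref{sec:polyconstr;th:globconstr1}, replacing global mean aggregations by local ones. Regularity guarantees that every node $u$ has the same in-degree $d_\text{in}$ and the same out-degree $d_\text{out}$. A local mean aggregation of a $0/1$ flag therefore yields $x/d_\text{in}$ (resp.\ $x/d_\text{out}$) with a denominator that is uniform across the graph. The strongly marked focus $v$ plays two roles. It is the place where partial products are stored. Because of its self-loop, $v\in\neighin(v)\cap\neighout(v)$, so $v$ can also divide its own value by $d_\text{in}$ or $d_\text{out}$ without changing anything elsewhere.

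\textbf{Base layer.} As before, I enumerate subformulas so that modal-depth-$0$ formulas come first, and evaluate them in $l_1$ via Lemma~\ref{app:omitted;lem:boolenformulas}.

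\textbf{Evaluating a monomial.} Fix a modal subformula $\langle\pi_1,\dotsc,\pi_m\rangle_\psi(\varphi_1,\dotsc,\varphi_m)$ and a monomial $\prod_{j=1}^{k}x_{i_j}$ of $\psi$, where each $\pi_{i_j}\in\{E_\text{in},E_\text{out}\}$.
\begin{enumerate}
\item Start from the value $1$ at the marked node, obtained from the mark dimension.
\item Inductively, suppose $v$ holds $y\in[0,1]$ and every other node holds $0$. This masking is enforced with the gadget $\relu(y-(1-\mathit{mark}))$, exactly as in Theorem~\ref{sec:polyconstr;th:globconstr1}.
\item To multiply by $x_{i_j}$ with $\pi_{i_j}=E_\text{in}$, first broadcast $y$ to the in-neighbours of $v$: each such $u$ has $v\in\neighout(u)$, so it aggregates over out-neighbours and receives $y/d_\text{out}$.
\item Next, $u$ multiplies by its Boolean flag for $\varphi_{i_j}$ using $\relu(z-(1-\text{flag}))$, which is valid since $z\in[0,1]$.
\item Finally, $v$ aggregates over $\neighin(v)$ and obtains $y\,x_{i_j}/(d_\text{out}d_\text{in})$. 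The case $E_\text{out}$ is symmetric.
\end{enumerate}
The self-loop means $v$ is itself among the neighbours involved. Since $(v,v)\in E$, $v$ really is counted as its own in/out-neighbour in $\sem{E_\text{in}}^G_v$ and $\sem{E_\text{out}}^G_v$, so it is counted consistently in both the broadcast and the collection step. Only the node $v$ can carry a nonzero value before each broadcast, so no spurious contributions arise from other nodes. After $\mathcal{O}(k)$ layers, $v$ holds the monomial divided by $d_\text{in}^{a}d_\text{out}^{b}$ with $a+b=2k$.

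\textbf{Aligning denominators and constants.} Let $\hat k=\mathit{deg}(\psi)$. I pad every monomial, and the constant $|b|$ stored at the mark, to the common denominator $D=d_\text{in}^{2\hat k}d_\text{out}^{2\hat k}$. Each padding step is a local aggregation at $v$ over its own self-loop followed by re-masking; this divides by $d_\text{in}$ or $d_\text{out}$. This mirrors the alignment trick of Theorem~\ref{sec:polyconstr;th:globconstr1}. Then Lemma~\ref{app:omitted;lem:peanoformulas} with $r=1/D$ evaluates $\psi$ at $v$, and the Boolean combinations are handled by the gadgets $\relu(r-x)$ and $\relu(x_1+x_2-r)$ as before. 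Here $r$ is produced at $v$ by the same padding from the mark.

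\textbf{Main obstacle: exact certainty.} The recognition definition requires the output to equal exactly $c(|V|)$, but $1/D$ depends on the degrees rather than on $|V|$ alone. I will first try the following fix. Compute $s=1/|V|^{4\hat k}$ at $v$ by repeated global mean aggregations of the masked mark. Since $d_\text{in},d_\text{out}\le|V|$, we have $1/D\ge s$. Then output $\relu(s-\relu(s-q))=\min(s,q)$, where $q\in\{0,1/D\}$ is the computed truth value. This equals $s$ when $\varphi$ holds and $0$ otherwise, giving certainty $1/|V|^{\mathcal{O}(\mathit{deg}(\varphi))}$ with $\mathcal{O}(\mathit{deg}(\varphi))$ layers. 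The same normalisation must be applied to all modal subformulas before the Boolean combination step.
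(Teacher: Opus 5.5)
Your proposal is correct and follows essentially the paper's proof. It relays each factor through the strongly marked focus with local mean aggregation (the paper differs only in aggregating the first factor directly at $v$), masks with the mark, aligns denominators via the self-loop under regularity, and clamps to a $|V|$-dependent certainty, which is exactly the paper's use of Lemma~\ref{app:omitted;lem:peanoformulas} with $r_1 = 1/(n_\text{in}n_\text{out})^{\mathit{deg}(\varphi)} \geq r_2 = 1/|V|^{2\mathit{deg}(\varphi)}$. One small detail: the mask $\relu(y-(1-\mathit{mark}))$ is only valid for $y\in[0,1]$, so instead of padding $|b|$ itself you should pad $r=1/D$ and form $b\cdot r$ linearly, or scale the mask by $|b|$.
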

\begin{proof}
  We enumerate the subformulas of $\varphi$ as a sequence $\chi_1, \dotsc, \chi_n$ such that if $\chi_j$ is a subformula
  of $\chi_i$, then $j \leq i$ holds, there is some $h \leq n$ such 
  that $\mathit{md}(\chi_j) = 0$ if and only if $j \leq h$, and $\chi_n = \varphi$. 
  Let $(G=(V,E,L),v) \in (\mathcal{G})^p_\bullet$ be a regular pointed graph where $v$ is marked and $v \in \neigh(v)$.
  Since $G$ is regular, we denote by $n_\text{in}$ the size of all ingoing and by $n_\text{out}$ the size of all outgoing neighbourhoods.
  We construct $M_\varphi$ as follows. Layer $l_1$ is constructed as in Lemma~\ref{app:omitted;lem:boolenformulas} in order to 
  evaluate all $\chi_i$ with $i \leq h$.

  Consider $\chi_i = \langle \pi_1, \dotsc, \pi_m \rangle_{\psi(x_1, \dotsc, x_m)}(\varphi_1, \dotsc, \varphi_m)$.
  By assumption, we know that $\mathit{md}(\chi_i) = 1$ and all $\pi_i \in \{E_\text{in}, E_\text{out}\}$, implying 
  that $M_\varphi$ has to evaluate $\psi$ 
  at node $v \in V$ over values $x_{i_j}$ equal to the number of nodes $u \in \neighin(v)$ or $u \in \neighout(v)$, depending 
  on the form of $\pi_{i_j}$, with $(G,u) \models \varphi_{i_j}$. 
  We add layers $l_{2}, \dotsc, l_{2k_i}$ to $M_\varphi$ to evaluate monomials $\prod_{j=1}^{k_i} x_{i_j}$ of $\chi_i$ in the following way: 
  W.l.o.g.\ assume that $\pi_{i_1} = E_\text{in}$. In layer $l_2$ we first compute $y_u = \frac{1}{n_\text{in}}x_{i_1}$ 
  for each node $u \in V$ using $\locagg_\text{in}$, 
  similarly to our constructions in Lemma~\ref{sec:polyconstr;lem:globhomconstr1} or Theorem~\ref{sec:polyconstr;th:globconstr1}.
  Then, we add the gadget $z_u=\relu(y_u-(1-(\bs{x}^{(1)}_u)_\textit{mark}))$ to $\comb_2$ where $\textit{mark}$
  corresponds to the state dimension that marks the focus $v$ (not necessarily equal to $p$ anymore due to potential shifts 
  in layer $l_1$). 
  Then, $z_u \neq 0$ at node $u$ if and only if $u = v$. W.l.o.g.\ assume that $\pi_{i_2} = E_\text{out}$.
  In layer $l_3$, we use $\locagg_\text{in}$ and add to $\comb_3$ a gadget
  which in combination computes
  $y_u' = \ind{G, u \models \varphi_{i_2}} \cdot \frac{1}{n_\text{in}} \sum_{w \in \neigh(u)} z_w$ for all $u \in V$.
  The respective gadget is identical to the one used in previous constructions to perform multiplication. Now, $y_u'$ is only non-zero for $u \in \neigh(v)$,
  yielding $y_u' = \ind{G, u \models \varphi_{i_2}} \cdot \frac{1}{n_\text{in}} y_v$.
  Next, in layer $l_4$ we use $\locagg_\text{out}$, and compute 
  $\frac{1}{n_\text{in}(n_\text{in}n_\text{out})} \prod_{j=1}^2 x_{i_j}$ at node $v$.
  Using this construction 
  repeatedly results in the value $\frac{1}{n_\text{in}(n_\text{in}n_\text{out})^{k_i-1}} \prod_{j=1}^{k_i} x_{i_j}$ 
  in layer $l_{2k_i}$. 
  
  To align necessarily introduced denominators for all monomial evaluations,
  we perform repeated ingoing or outgoing local aggregations until each denominator is 
  $\frac{1}{(n_\text{in}n_\text{out})^{\mathit{deg(\chi_i)}}}$. Note that this requires 
  adding $\relu(x - \relu(1- x_\text{mark}))$ gadgets to ensure that non-zero values occur only at $v$. 
  Similarly, we adjust layers $l_{2}, \dots, l_{2\mathit{deg}{\varphi}+1}$ to compute 
  $\frac{1}{(n_\text{in}n_\text{out})^{\mathit{deg}(\varphi)}}b$ 
  for all constant parts $b$ ocurring in $\psi$. From here onward, we proceed as stated in Lemma~\ref{app:omitted;lem:peanoformulas} 
  to evaluate $\chi_i$ using $r_1 = \frac{1}{(n_\text{in}n_\text{out})^{\mathit{deg}(\varphi)}}$ and $r_2 = \frac{1}{|V|^{2\mathit{deg}(\varphi)}}$, which we can compute using repeated global aggregation in
  in layers $l_1$ to $l_{2\mathit{deg}(\varphi)}$. 

  The remaining construction of $M_\varphi$ is exactly as described in the proof of Theorem~\ref{sec:polyconstr;th:globconstr1}.
\end{proof}
We remark that if we restrict to homogeneous constraints, assumption (c), meaning that the focus $v$ has a self-loop, 
could be dropped.

The limitation to regular graphs may restrict the applicability in most scenarios.
However, allowing for the use of sum or max aggregation functions alongside mean aggregations
enables us to show that such MPNN can recognise $\PML_1^E$ over all graphs that satisfy
properties (b) and (c) as previously discussed.
\begin{theorem}
  \label{sec:polyconstr;th:localconstraintssummeanmax}
    Let $\varphi \in \PML_1^{E}$ and $A = \{\text{mean}, \text{sum}\}$ or $A = \{\text{mean}, \text{max}\}$.
    There is $M_\varphi \in \mathcal{M}_A$ with $\mathcal{O}(\mathit{deg}(\varphi))$ layers 
    that recognises $\varphi$ over all graphs $(G,v) \in \mathcal{G}^p_\bullet$ where $v$ is strongly marked
    with certainty $\frac{1}{|V|^{\mathcal{O}(\mathit{deg}(\varphi))}}$.
\end{theorem}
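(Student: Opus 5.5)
We follow the construction of Theorem~\ref{sec:polyconstr;th:localconstraintsmean}. The only point at which regularity was used is that local mean aggregation divides by a uniform neighbourhood size $n_\text{in}$ or $n_\text{out}$, so that all denominators can later be aligned. In general graphs, the denominator at node $u$ is $|\neighin(u)|$ or $|\neighout(u)|$, which varies with $u$. With sum or max available, we therefore perform every local aggregation used for counting with the additional aggregation instead, and keep mean only for the global aggregations that produce the certainty factors $\frac{1}{|V|^{\cdot}}$.

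First, layer $l_1$ evaluates all modal-depth-0 subformulas by Lemma~\ref{app:omitted;lem:boolenformulas}. For a monomial $\prod_{j=1}^{k_i} x_{i_j}$ we cannot work with raw integer counts, because the multiplication gadget $\relu(y-(1-b))$ requires $y\in[0,1]$. We therefore keep a normalising factor $\frac{1}{|V|}$, which is computed uniformly at every node by global mean aggregation over the constant $1$ (or over the mark).

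\textbf{Case $A=\{\text{mean},\text{sum}\}$.} At each node $w$ we store $\ind{G,w\models\varphi_{i_1}}\cdot\frac{1}{|V|}$. Summing over $\neighin$ at $v$ gives $\frac{x_{i_1}}{|V|}\in[0,1]$. We mask this value to $v$ with $\relu(y-(1-x_\textit{mark}))$. We then send it back to all neighbours $w$ of $v$; this uses a sum over $\neigh$ in the appropriate direction, and the self-loop ensures $v$ reaches its own neighbourhood consistently, exactly as in Theorem~\ref{sec:polyconstr;th:localconstraintsmean}. At each such $w$ we multiply by $\ind{G,w\models\varphi_{i_2}}$ and by $\frac{1}{|V|}$, and sum back into $v$ to obtain $\frac{x_{i_1}x_{i_2}}{|V|^2}$. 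Note that the neighbour relation is symmetric via in/out. Iterating gives $\frac{1}{|V|^{k_i}}\prod_j x_{i_j}$ after $\mathcal{O}(k_i)$ layers. Every value stays in $[0,1]$ because each count is at most $|V|$. Denominators are then aligned to $|V|^{\deg(\varphi)}$ by repeated global mean aggregation with the marked-node gadget, as in Theorem~\ref{sec:polyconstr;th:globconstr1}, and constants $b$ are treated the same way. We finish with Lemma~\ref{app:omitted;lem:peanoformulas} using $r_1=r_2=\frac{1}{|V|^{\deg(\varphi)}}$, followed by the Boolean gadgets of Theorem~\ref{sec:polyconstr;th:globconstr1}.

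\textbf{Case $A=\{\text{mean},\text{max}\}$.} This is the main obstacle, since max cannot count. We combine local mean with local max. Taking max over $\neighin(v)$ of the marked indicator, together with the self-loop, lets each node know whether it is adjacent to $v$. Applying a local mean of the constant $1$ is then useless, but we can recover $d=|\neighin(v)|$ from the mean. Concretely, at $v$ the mean of $\ind{\varphi_{i_1}}$ equals $\frac{x_{i_1}}{d}$, and the mean of the mark over $\neighin(v)$ equals $\frac{1}{d}$, using the self-loop. So $v$ holds $\frac{1}{d}$ and $\frac{x}{d}$ with the same denominator, and the mark gives a clean unit. To clear the $d$, we broadcast $\frac{1}{d}$ from $v$ to its neighbours, where it becomes the only nonzero value, and max-aggregate it so that every neighbour of $v$ receives exactly $\frac{1}{d}$ rather than an average. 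Each such neighbour then returns $\ind{\varphi}\cdot\frac{1}{|V|}$, again weighted via mean over its own neighbourhood containing $v$. The expected difficulty is exactly this bookkeeping: each local mean introduces a node-specific denominator, and it must be cancelled by a matching term that max makes uniform among $v$'s neighbours. Our plan is to route every step through $v$, so that all denominators are powers of $|\neighin(v)|$ and $|\neighout(v)|$ at the single node $v$. These act like the regular-case $n_\text{in},n_\text{out}$, which lets us reuse the alignment and Peano-evaluation argument of Theorem~\ref{sec:polyconstr;th:localconstraintsmean}, with $r_1$ taken as the corresponding product of $v$'s degrees, bounded below by $\frac{1}{|V|^{\mathcal{O}(\deg\varphi)}}$.
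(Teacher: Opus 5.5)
\textbf{Maximum aggregation.} Your $\{\text{mean},\text{max}\}$ case, as summarised in its last two sentences, is the paper's proof. The steps are:
\begin{itemize}
\item a local mean at $v$ gives $\frac{x_{i_1}}{|\neighin(v)|}$;
\item this is masked to $v$ and broadcast by max; after masking, $v$ carries the only nonzero value and all values are nonnegative, so every neighbour of $v$ receives it exactly;
\item the neighbours multiply by their Boolean flag, and the next local mean is again taken at $v$;
\item all denominators are products of $|\neighin(v)|$ and $|\neighout(v)|$, aligned by mean aggregation along the self-loop;
\item Lemma~\ref{app:omitted;lem:peanoformulas} is applied with $r_1$ the aligned inverse-degree product and $r_2=\frac{1}{|V|^{2\mathit{deg}(\varphi)}}$. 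The output is $r_2$, which is what makes the certainty a function of $|V|$ alone.
\end{itemize}
The middle of that paragraph should be corrected. What is broadcast is the masked partial product $\frac{x_{i_1}}{d}$, not $\frac{1}{d}$. Each neighbour returns $\mathbf{1}_{G,w\models\varphi_{i_2}}\cdot\frac{x_{i_1}}{d}$, and the mean is taken over $v$'s neighbourhood, not the neighbour's. Nothing needs to be ``cleared'', because the lemma only requires $r_1\geq r_2$. The paper runs exactly this construction for sum as well, since sum and max agree when a single incoming value is nonzero.

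\textbf{Sum aggregation.} Your sum case is a genuinely different route. You count with sum over $\frac{1}{|V|}$-weighted indicators, so every scale is a power of $|V|$, $r_1=r_2$, and alignment goes through global mean and the mark as in Theorem~\ref{sec:polyconstr;th:globconstr1}.

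One step fails as written. $\comb$ cannot multiply the value $\frac{x_{i_1}}{|V|}$ received at a neighbour $w$ by $\frac{1}{|V|}$: a ReLU network only lets you multiply a $[0,1]$ value by a Boolean. Summing back into $v$ without that factor produces values up to $|V|$. That breaks the mask $\relu(y-(1-x_{\mathit{mark}}))$ at other nodes and the later multiplication gadgets. The fix is to apply a global mean to $\mathbf{1}_{G,w\models\varphi_{i_2}}\cdot\frac{x_{i_1}}{|V|}$, which yields $\frac{x_{i_1}x_{i_2}}{|V|^2}$ at every node. Also, $\frac{1}{|V|}$ comes from the global mean of the mark, not of the constant $1$.

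\textbf{Comparison.} Once repaired, your route is really a reduction to the global case. By your own remark, one sum or max aggregation of the mark tells each node whether it is an in- or out-neighbour of $v$. After that, every $x_{i_j}$ is a global count of a Boolean property, and the construction of Theorem~\ref{sec:polyconstr;th:globconstr1} applies after one extra layer. So ``max cannot count'' is no obstacle, one argument covers both choices of $A$, and the self-loop is not even used. The paper's route instead keeps the counting in local means at $v$ and uses sum or max only to broadcast. It pays for this with degree-dependent denominators, the self-loop for alignment, and $r_1\neq r_2$.
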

\begin{proof}
    The construction of $M_\varphi$ works analogous to Theorem~\ref{sec:polyconstr;th:localconstraintsmean}.
    However, while evaluating monomials $\prod_{j=1}^{k_i} x_{i_j}$, we alternate between realisations of
    $\locagg_\text{out}$ and $\locagg_\text{in}$ either by mean or by sum (or max) aggregation.
    
    As done in the proof of Theorem~\ref{sec:polyconstr;th:localconstraintsmean}, we employ local mean aggregation, w.l.o.g\ assume its $\locagg_\text{in}$, 
    and a specific gadget in $l_2$ to compute $z_u = \frac{1}{|\neighin(v)|}x_{i_1}$ if $u = v$, and $z_u = 0$ otherwise.
    In layer $l_3$, instead of mean aggregation, we use outgoing local sum aggregation (or max aggregation) to compute
    $y_u' = \ind{G, u \models \varphi_{i_2}} \cdot \sum_{w \in \neigh(u)} z_w$ or 
    $y_u' = \ind{G, u \models \varphi_{i_2}} \cdot \max_{w \in \neigh(u)} z_w$ 
    for all $u \in V$. Subsequently, in $l_4$, we revert to local mean aggregation and continue
    as before. This construction introduces a denominator of the form $\frac{1}{n_{i_1} \cdot \, \dotsb \, \cdot n_{i_{\mathit{deg}(\varphi)}}}$
    where $n_{i_j} \in \{|\neighin(v)|, |\neighout(v)|\}$ to each evaluated monomial. However, in order to align 
    these denominators across all monomials and constants leads to $\frac{1}{|\neighin(v)|^{\mathit{deg}(\varphi)} \cdot |\neighout(v)|^{\mathit{deg}(\varphi)}} \geq \frac{1}{|V|^{2\mathit{deg}(\varphi)}}$
    in the worst case. 
    
    Aside from these adjustments, the construction proceeds exactly as in Theorem~\ref{sec:polyconstr;th:localconstraintsmean}.
\end{proof}

\subsection*{Mixed Modalities}

In the settings of Theorem~\ref{sec:polyconstr;th:globconstr1} or Theorem~\ref{sec:polyconstr;th:localconstraintsmean} and 
\ref{sec:polyconstr;th:localconstraintssummeanmax}, we assumed that modal formulas use uniform modalities, in
the sense that they are either $\top$ or $E_\text{in}, E_\text{out}$, but not both simultaneously. Furthermore, we
did not yet consider the modality $\text{id}$. However, combinations of these modalities bring the full expressive
power of $\PML_1$.
\begin{example}
Let $\varphi = \langle E_\text{in}, \top, \text{id} \rangle(p_r \land p_b, p_b, p_g \lor p_y)$, where we
use $\lor$ as the usual abbreviation for logical disjunction, and $\psi(x_{r\land b}, x_b, x_{g\lor y}) =
(16 \leq x_{r\land b}^2 + x_bx_{g \lor y}) \land (x_{b}^3 + x_{r \land b}(1 - x_{g \lor y}) \leq 64)$. 
As in previous examples, we interpret $p_r$ as colour \emph{red}, $p_b$ as colour \emph{blue}, $p_g$ as
colour \emph{green}, and $p_y$ as colour \emph{yellow}. Informally, the formula $\varphi$ is satisfied by
  exactly those pointed graphs $(G,v)$ where 
  \begin{itemize}
  \item if $v$ is of colour green or yellow, we have that
  the square of the number of ingoing neighbours of $v$ which are red and blue plus the global number of blue nodes
  is at least $16$ ($16 \leq x_{r\land b}^2 + x_b\cdot 1$), and the cube of the number of blue nodes in $G$ is
  at most $64$ ($x_{b}^3 + x_{r \land b}\cdot 0\leq 64$); or
  \item if $v$ is neither of colour green nor yellow, we have that the square of the
  number of ingoing neighbours of $v$ which are red and blue is at least $16$ ($16 \leq x_{r\land b}^2 + x_b\cdot 0$),
  and the cube of the number of blue nodes in $G$ plus the number of ingoing neighbours of $v$ that are
  red and blue is at most $64$ ($x_{b}^3 + x_{r \land b}\cdot 1 \leq 64$).
  \end{itemize}
\end{example}

As it turns out, in order to capture all formulas from $\PML_1$, it suffices to combine
the constructions developed in previous sections.
\begin{theorem}
  \label{sec:polyconstr;th:mixedmodmean}
    Let $\varphi \in \PML_1$.
    There is $M_\varphi \in \meanmpnn$ with $\mathcal{O}(\mathit{deg}(\varphi))$ layers 
    that recognises $\varphi$ over all regular graphs in $(G,v) \in (\mathcal{G})^p_\bullet$ 
    where $v$ is strongly marked with certainty $\frac{1}{|V|^{\mathcal{O}(\deg(\varphi))}}$.
\end{theorem}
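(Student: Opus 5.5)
We build $M_\varphi$ by combining the constructions from the proofs of Theorem~\ref{sec:polyconstr;th:globconstr1} and Theorem~\ref{sec:polyconstr;th:localconstraintsmean}, with one extra gadget for the modality $\mathit{id}$. As before, we enumerate the subformulas $\chi_1,\dotsc,\chi_n$ so that subformulas come first, all modal-free formulas have index at most $h$, and $\chi_n=\varphi$. Layer $l_1$ evaluates all $\chi_j$ with $j\le h$ via Lemma~\ref{app:omitted;lem:boolenformulas}. Each modal subformula $\chi_i=\langle\pi_1,\dotsc,\pi_m\rangle_\psi(\varphi_1,\dotsc,\varphi_m)$ now has modalities from $\{\mathit{id},E_\text{in},E_\text{out},\top\}$. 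The counts $x_j$ must be evaluated at the focus $v$; every $\varphi_j$ is Boolean, so its truth value at each node is already present after $l_1$.

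The key step is evaluating a monomial $\prod_{j=1}^{k}x_{i_j}$ whose factors mix modalities. We process the factors one after another and keep the invariant that, after each factor, the partial product (scaled by some positive denominator $D$) sits only at the marked focus $v$ and is zero elsewhere. This is enforced, as before, by the gadget $\relu(y-(1-x_\mathit{mark}))$, which uses the fact that all intermediate values lie in $[0,1]$. To multiply in the next factor:
\begin{itemize}
\item For a local factor $E_\text{in}$ or $E_\text{out}$, we use exactly the two-layer trick from Theorem~\ref{sec:polyconstr;th:localconstraintsmean}. The value at $v$ is spread to its neighbours by local mean aggregation, multiplied there by the indicator of $\varphi_{i_j}$, and collected back at $v$ by the opposite local mean. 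By regularity this multiplies $D$ by $n_\text{in}n_\text{out}$. The self-loop at $v$ is not needed here, only for handling constants and the first factor.
\item For a global factor $\top$, we first have every node $u$ hold $\frac{1}{|V|}\cdot(\text{value at }v)$ by global mean aggregation. Each node multiplies this by its indicator of $\varphi_{i_j}$, and a second global mean followed by the mark gadget returns $\frac{x_{i_j}}{|V|^2}\cdot(\text{value})$ to $v$.
\item For an $\mathit{id}$ factor, $x_{i_j}\in\{0,1\}$ is just the truth value of $\varphi_{i_j}$ at $v$, so the product is a single multiplication gadget at $v$ with no change of denominator.
\end{itemize}
Each factor costs at most two layers, so every monomial is obtained within $\mathcal{O}(\deg(\varphi))$ layers as $\frac{1}{D}\prod_j x_{i_j}$ located at $v$, where $D=|V|^{2a}(n_\text{in}n_\text{out})^{c}$ and $a+c\le\deg(\varphi)$.

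Next we align the denominators of all monomials and constants to the common value $R=|V|^{2\deg(\varphi)}(n_\text{in}n_\text{out})^{\deg(\varphi)}$. For the $|V|$ part we use dummy global round trips through $v$, as in Theorem~\ref{sec:polyconstr;th:globconstr1}. For the $n_\text{in}n_\text{out}$ part we use dummy local round trips through $v$, where the self-loop ensures the value returns to $v$, as in Theorem~\ref{sec:polyconstr;th:localconstraintsmean}. Since factors may be processed in any order and the dummy steps just multiply by fixed scalars, all terms of $\psi$ end up scaled by the same $1/R$. A constant $b$ is handled by starting from $|b|\cdot x_\mathit{mark}$ and applying $\deg(\varphi)$ dummy steps of each kind, with the sign applied at the end. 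We then invoke Lemma~\ref{app:omitted;lem:peanoformulas} with $r_1=1/R$ and $r_2=|V|^{-4\deg(\varphi)}\le 1/R$. The value $r_2$ is computable in parallel by repeated global means of $x_\mathit{mark}$, so it is available at every node. This yields, at $v$, the value $r_2$ if $\chi_i$ holds and $0$ otherwise. Boolean combinations are then treated exactly as at the end of the proof of Theorem~\ref{sec:polyconstr;th:globconstr1}, rescaling the modal-free $\chi_j$ to $\{0,r_2\}$.

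The main obstacle is the bookkeeping that makes these uniform scalings valid. Every intermediate value must stay in $[0,1]$ so that the ReLU multiplication and masking gadgets are exact. All monomials of one Peano term must also pass through the same number of layers, so we pad shorter pipelines with identity/dummy layers. Both are achieved by processing every monomial in a fixed schedule of $\deg(\varphi)$ slots, where each slot performs one global and one local round trip, and the factor's indicator is multiplied in at the appropriate point (or $1$ for padding). This gives the bound of $\mathcal{O}(\deg(\varphi))$ layers and certainty $|V|^{-\mathcal{O}(\deg(\varphi))}$.
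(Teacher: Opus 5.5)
Your proposal is correct and follows essentially the paper's route. You combine the global-mean gadgets of Theorem~\ref{sec:polyconstr;th:globconstr1} with the regular local-mean round trips of Theorem~\ref{sec:polyconstr;th:localconstraintsmean}, use the mark gadget, align all denominators by dummy aggregations through $v$ (via the self-loop), and finish with Lemma~\ref{app:omitted;lem:peanoformulas} using an $r_2$ that depends only on $|V|$. Your ``partial product lives at $v$'' invariant and direct multiplication for $\mathit{id}$ just make the paper's sketch more explicit.

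Two details need tightening. First, starting constants from $|b|\cdot x_{\mathit{mark}}$ breaks your own $[0,1]$ requirement for the masking gadget. Either scale the mask by $|b|$, as the paper does, or start from $x_{\mathit{mark}}$ and apply $b$ only in the final linear combination. Second, padding slots must multiply by the mark (a count of one), not by the constant $1$ at every node. The latter contributes $1/|V|$ and $1/n_\text{out}$ (or $1/n_\text{in}$) instead of $1/|V|^2$ and $1/(n_\text{in}n_\text{out})$, which would make the denominators non-uniform.
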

\begin{proof}
The proof follows the same line of arguments as used in the proofs of Theorem~\ref{sec:polyconstr;th:globconstr1} and
  Theorem~\ref{sec:polyconstr;th:localconstraintsmean}.

To allow $M_\varphi$ to evaluate monomials $\prod_{j=1}^{k_i} x_{i_j}$ for a given regular pointed graph
  $(G=(V,E,L), v)$, where $x_{i_j}$ corresponds to a modality $ \pi_{i_j} \in \{\text{id}, E_\text{in}, E_\text{out}, \top\}$, we adjust and combine previous constructions as follows. If $x_{i_j}$ corresponds to $\top$,
we evaluate it as described in Theorem~\ref{sec:polyconstr;th:globconstr1} using global mean aggregation,
thus introducing denominators of the form $\frac{1}{|V|^i}$. If
$x_{i_j}$ corresponds to $E_\text{in}$ or $E_\text{out}$, we evaluate it as described in
Theorem~\ref{sec:polyconstr;th:localconstraintsmean}, which introduces denominators of the form
$\frac{1}{(n_\text{in}n_\text{out})^i}$, where $n_\text{in}$ and $n_\text{out}$ are the sizes of ingoing and outgoing
neighbourhoods of the regular graph $G$. If $x_{i_j}$ corresponds to $\text{id}$, meaning aggregation
over the focus node $v$ only, we use the fact that $v$ is marked, and (i) adjust all dimensions $x$ of interest
by $\relu(x - \relu(1-x_\text{mark}))$, where $x_\text{mark}$ corresponds to the dimension marking $v$, storing the result in new dimensions;
(ii) use global or local aggregation. Note that depending on the choice this introduces
$\frac{1}{|V|^i}$, $\frac{1}{n_\text{in}^i}$, or $\frac{1}{n_\text{out}^i}$. Finally, we use repeated global, ingoing,
or outgoing aggregation to align all denominators introduced in the values representing monomials and constants.
We remark that this requires a linear amount of layers regarding $\mathit{deg}(\varphi)$, depending 
on distribution of different modalities across monomials.

Otherwise, the construction remains the same as in previous proofs.
\end{proof}

Allowing for either sum or max aggregations besides mean aggregations allows to drop the regularity
assumption. The proof requires the exact same adjustments as made in the step from 
Theorem~\ref{sec:polyconstr;th:localconstraintsmean} to Theorem~\ref{sec:polyconstr;th:localconstraintssummeanmax}.
\begin{theorem}
  \label{sec:polyconstr;th:mixedmodmeansummax}
  Let $\varphi \in \PML_1$ and $A = \{\text{mean}, \text{sum}\}$ or $A = \{\text{mean}, \text{max}\}$.
  There is $M_\varphi \in \mathcal{M}_A$ with $\mathcal{O}(\mathit{deg}(\varphi))$ layers 
  that recognises $\varphi$ over all graphs in $(G,v) \in (\mathcal{G})^p_\bullet$ 
  where $v$ is strongly marked with certainty $\frac{1}{|V|^{\mathcal{O}(\mathit{deg}(\varphi))}}$.
\end{theorem}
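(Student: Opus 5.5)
The plan is to combine the construction in the proof of Theorem~\ref{sec:polyconstr;th:mixedmodmean} with the modification that led from Theorem~\ref{sec:polyconstr;th:localconstraintsmean} to Theorem~\ref{sec:polyconstr;th:localconstraintssummeanmax}.

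First, I enumerate the subformulas $\chi_1,\dotsc,\chi_n$ with the modal-depth-zero ones first. Layer $l_1$ evaluates them via Lemma~\ref{app:omitted;lem:boolenformulas}. For each modal subformula $\chi_i$ and each monomial $\prod_{j=1}^{k_i} x_{i_j}$ of its Peano formula, I build layers that multiply in the factors one at a time, always keeping the partial product as a value that is nonzero only at the marked focus $v$.

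The factors are handled according to their modality.
\begin{itemize}
\item A $\top$-factor uses global mean aggregation, which introduces a factor $\frac{1}{|V|}$. This works exactly as in Theorem~\ref{sec:polyconstr;th:globconstr1}; the partial product is multiplied with the $\{0,1\}$-indicator of $\varphi_{i_j}$ via the $\relu(y-(1-x))$ gadget.
\item An $\mathit{id}$-factor is handled by masking with the mark dimension, as in Theorem~\ref{sec:polyconstr;th:mixedmodmean}.
\item An $E_\text{in}$ or $E_\text{out}$ factor follows the alternating scheme of Theorem~\ref{sec:polyconstr;th:localconstraintssummeanmax}. The partial product, stored only at $v$, is pushed to the neighbours of $v$ by a local \emph{sum} (or \emph{max}) aggregation. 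Since only $v$ carries a nonzero value and $v$ has a self-loop, each neighbour receives exactly that value, with no denominator. There it is multiplied by the indicator of $\varphi_{i_j}$, and then collected back at $v$ by a local \emph{mean} aggregation in the appropriate direction. This introduces the factor $\frac{1}{|\neighin(v)|}$ or $\frac{1}{|\neighout(v)|}$, which depends only on the focus and not on the receiving node. A final $\relu(x-\relu(1-x_\text{mark}))$ gadget restores the invariant that the value lives only at $v$.
\end{itemize}

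After this step, each monomial is represented at $v$ as $\frac{1}{D}\prod_j x_{i_j}$, where $D$ is a product of at most $\deg(\varphi)$ factors from $\{|V|,|\neighin(v)|,|\neighout(v)|\}$.

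The main obstacle is aligning these heterogeneous denominators, since without regularity they are genuinely different numbers. My plan is to pad every monomial, and every constant $b$, which starts at $v$ as $|b|$ via the mark, up to the common denominator
\[
  R = |V|^{d}\,|\neighin(v)|^{d}\,|\neighout(v)|^{d}, \qquad d = \deg(\varphi).
\]
To do this I apply the missing number of extra global means, and extra ``sum-out, mean-back'' round trips in the needed direction, each followed by re-masking at $v$. The self-loop guarantees that these neighbourhoods are nonempty, so the corresponding mean aggregations are well defined. The number of extra layers is linear in $d$, keeping the depth at $\mathcal{O}(\deg(\varphi))$. Every padded quantity then lies in $[0,1]$, so the multiplication gadgets remain valid.

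Finally, I apply Lemma~\ref{app:omitted;lem:peanoformulas} with $r_1=\frac{1}{R}$ and $r_2=\frac{1}{|V|^{3d}}\le r_1$. The value $r_2$ is computed by repeated global means of the mark indicator and is uniform over all nodes. This evaluates $\psi$ with certainty $r_2$. The Boolean combinations are then handled exactly as at the end of Theorem~\ref{sec:polyconstr;th:globconstr1}, giving certainty $\frac{1}{|V|^{\mathcal{O}(\deg(\varphi))}}$.

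One point remains to check: when the max variant replaces sum, pushing the value to a neighbour still yields the exact value. This holds because only $v$ is nonzero and all values are nonnegative.
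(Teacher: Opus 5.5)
Your overall route matches the paper's: take the mixed-modality construction of Theorem~\ref{sec:polyconstr;th:mixedmodmean} and replace the regularity-dependent local steps by the sum/max push of Theorem~\ref{sec:polyconstr;th:localconstraintssummeanmax}. Your handling of a local factor is correct: push the value held at $v$ out by sum or max, multiply by the indicator of $\varphi_{i_j}$, collect by a local mean, and re-mask. The step you identify as the main obstacle, however, fails as written. A ``sum-out, mean-back'' round trip with no indicator in between is the identity on the value at $v$. If $z$ is held only at $v$, every $u\in\neighin(v)$ receives exactly $z$, and the mean back gives $\frac{1}{|\neighin(v)|}\sum_{u\in\neighin(v)} z = z$. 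The same happens in the outgoing direction and with max. So no factor $\frac{1}{|\neighin(v)|}$ or $\frac{1}{|\neighout(v)|}$ is ever added. Monomials with different denominators, and the constants $b$, are then never brought to the common scale $\frac{1}{R}$, and the linear comparison in Lemma~\ref{app:omitted;lem:peanoformulas} becomes invalid.

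The missing idea is what the self-loop is actually for. The value is nonzero only at $v$, and $v\in\neighin(v)\cap\neighout(v)$. Hence a \emph{single} ingoing (resp.\ outgoing) local mean aggregation, with no push, yields $\frac{z}{|\neighin(v)|}$ (resp.\ $\frac{z}{|\neighout(v)|}$) at $v$. You then re-mask with $\relu(x-\relu(1-x_\text{mark}))$ and repeat as needed. This is the paper's padding by ``repeated ingoing or outgoing local aggregations'' with masking. It is also why the paper remarks that the self-loop can be dropped for homogeneous constraints.

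The roles you assign to the self-loop are not the relevant ones. The push reaches the in-neighbours of $v$ without it, and mean over an empty multiset is defined to be $\bs{0}$ anyway. With this repair, your choice $r_1=\frac{1}{R}\ge r_2=\frac{1}{|V|^{3d}}$ goes through.

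There is also a smaller inconsistency. You require the partial product to live only at $v$. But the $\top$-mechanism of Theorem~\ref{sec:polyconstr;th:globconstr1} multiplies a partial product that is uniform across nodes by the indicator at \emph{every} node before the global mean. With the value only at $v$, this computes $\frac{1}{|V|}$ times the $\mathit{id}$-count instead. There are two ways to fix this. You can handle all $\top$-factors first, while the value is still uniform, and mask afterwards. Alternatively, you can first broadcast $z$ to all nodes, either exactly via a global sum or max, or via a global mean at the cost of an extra $\frac{1}{|V|}$ per factor, which must then be accounted for in $R$ and $r_2$.
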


%% file: sections/nested_polyconstraints.tex
In this section, we consider fragments of $\PML$ of arbitrary modal depth. Contrary to the settings of Lemma~\ref{sec:polyconstr;lem:globhomconstr1} and
Theorem~\ref{sec:polyconstr;th:globconstr1}, focusing on fragments with nested modal formulas but 
using only $\top$ as a modality does not bring any new insights into the power of mean MPNN 
since nesting does not enhance the expressive capabilities of these fragments compared to $\PML_1^\top$, 
as demonstrated in the following.
%\begin{lemma}\label{lem:neg-pml}
%For any PML formulas $\varphi_1,\dots,\varphi_m$, Peano formula $\psi(x_1,\dots,x_m)$, and modalities $\pi_1,\dots,\pi_m$
%the following PML formulas are equivalent:
%\begin{itemize}
%\item $\neg(\langle \pi_1, \dotsc, \pi_m \rangle_{\psi(x_1, \dotsc, x_m)}(\varphi_1, \dotsc, \varphi_m))$
%\item $\langle \pi_1, \dotsc, \pi_m \rangle_{\neg\psi(x_1, \dotsc, x_m)}(\varphi_1, \dotsc, \varphi_m)$
%\end{itemize}
%\end{lemma}
%\begin{proof}
%Let $(G,v)$ be a pointed graph and 
%$n_i := |\{u \in \sem{\pi_i}^G_v \mid G,u \models \varphi_i\}|$ for all $i \le m$.
%Note that the $n_i$ are completely determined by $(G,v)$.
%By definition, 
%\begin{align*}
%G,v &\models \neg(\langle \pi_1, \dotsc, \pi_m \rangle_{\psi(x_1, \dotsc, x_m)}(\varphi_1, \dotsc, \varphi_m)) \text{ iff}\\
%G,v &\not\models \langle \pi_1, \dotsc, \pi_m \rangle_{\psi(x_1, \dotsc, x_m)}(\varphi_1, \dotsc, \varphi_m) \text{ iff}\\
%n_1,\dots,n_m &\not\models \psi(x_1,\dots,x_m) \text{ iff}\\
%n_1,\dots,n_m &\models \neg\psi(x_1,\dots,x_m) \text{ iff}\\
%G,v &\models \langle \pi_1, \dotsc, \pi_m \rangle_{\neg\psi(x_1, \dotsc, x_m)}(\varphi_1, \dotsc, \varphi_m).
%\end{align*}
%\end{proof}

\begin{proposition}\label{prop:global-pml-depth}
For all $\varphi \in \PML^\top$ there is $\varphi' \in \PML^\top_1$ with $\sem{\varphi} = \sem{\varphi'}$. 
\end{proposition}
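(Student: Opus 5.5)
The key observation is that a modal subformula using only the modality $\top$ has a truth value that does not depend on the focus node: $\sem{\top}^G_v = V$ for every $v$, so whether $G,v \models \langle \top,\dotsc,\top\rangle_{\psi}(\varphi_1,\dotsc,\varphi_m)$ holds depends only on $G$. I would flatten nested modalities from the inside out, by induction on the modal depth. In each step I replace the innermost modal subformulas by something that can be pulled into a Peano constraint at the next level up.

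Concretely, take a modal subformula $\chi = \langle \top,\dotsc,\top\rangle_{\psi}(\varphi_1,\dotsc,\varphi_m)$ occurring inside some $\varphi_i'$ of an enclosing modal formula $\langle\top,\dotsc,\top\rangle_{\psi'}(\varphi_1',\dotsc,\varphi_{m'}')$. Suppose each $\varphi_j$ already has modal depth $0$, which is the case for innermost $\chi$.

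First, bring $\varphi_i'$ into a form that isolates the truth value of $\chi$. Since $\chi$ is constant over the nodes of $G$, we have $\varphi_i' \equiv (\chi \land \varphi_i'[\chi/\text{true}]) \lor (\neg\chi \land \varphi_i'[\chi/\text{false}])$. Here the substituted constants are expressed without new atoms: true as $p \lor \neg p$ and false as $p\land\neg p$.

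Second, count nodes. Let $n_i$ be the number of nodes satisfying $\varphi_i'$, and let $n_i^{+}$, $n_i^{-}$ be the counts for $\varphi_i'[\chi/\text{true}]$ and $\varphi_i'[\chi/\text{false}]$. Then $n_i = n_i^{+}$ if $G \models \chi$, and $n_i = n_i^{-}$ otherwise.

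Third, build the flattened formula. Add $\varphi_1,\dotsc,\varphi_m$ and the two substituted versions of $\varphi_i'$ as new arguments of the outer modality, all under $\top$. The new Peano formula is
\[
(\psi(x_1,\dotsc,x_m) \land \psi'[x_i^{+}/x_i]) \lor (\neg\psi(x_1,\dotsc,x_m) \land \psi'[x_i^{-}/x_i]).
\]
This is a Boolean combination of polynomial inequalities, so it is again a Peano formula, using the usual $\lor$ abbreviation. Substituting variables in a polynomial keeps it a polynomial.

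Repeating this for every argument $i$ and every occurrence of an inner modal subformula removes one level of nesting while preserving semantics. Induction on the number of modal subformulas of depth at least $2$ then yields $\varphi' \in \PML_1^\top$.

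There is one more case. A modal subformula may also appear outside any modality, in a Boolean combination at the top level together with propositions. That case is already within $\PML_1^\top$ once all arguments have depth $0$, so nothing further is needed there.

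I expect the main obstacle to be bookkeeping rather than anything conceptual. When one $\varphi_i'$ contains several inner modal subformulas $\chi_1,\dotsc,\chi_r$, the case split must run over all $2^r$ truth assignments. Each case gets its own copy of the argument and a conjunction of $\psi_{\chi_s}$ or $\neg\psi_{\chi_s}$ guards. I would handle this uniformly by splitting over the assignment tuple and noting that the formula blows up exponentially but stays finite. I would also check that the guards' counting variables can be shared inside one modal operator, simply by listing all needed argument formulas in a single $\langle \top,\dotsc,\top\rangle$.
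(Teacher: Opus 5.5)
Your argument is correct, but it takes a different route from the paper.

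\textbf{The paper's route.} The paper does not flatten inside-out. Let $S$ be the set of strict modal subformulas of $\varphi$, and let $\gamma^T$ denote $\gamma$ with every strict modal subformula replaced by true or false according to membership in $T$. The paper then writes $\varphi$ directly as $\bigvee_{T\subseteq S}\bigl(\varphi^T\wedge\bigwedge_{\tau\in T}\tau^T\wedge\bigwedge_{\sigma\in S\setminus T}\neg\sigma^T\bigr)$. That is, it guesses the truth values of all modal subformulas at once and checks the guess with depth-1 formulas. Only the actual assignment passes all the checks.

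\textbf{Comparison.} Both proofs rest on the fact you isolate, namely that $\top$-only modal formulas are node-independent. They differ in where the case distinction lives.
\begin{itemize}
\item The paper keeps it in the logic, at top level, and never modifies a Peano formula.
\item You push it into the arithmetic: you append $\chi$'s arguments to the outer modality and merge $\psi$ into $\psi'$ by a disjunction.
\end{itemize}
Your version preserves the top-level Boolean structure, turning each modal subformula into a single depth-1 modal formula. It needs only that Peano formulas are closed under Boolean combinations and variable renaming.

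The paper's version is a one-shot construction with no induction on nesting. Because it keeps every Peano constraint verbatim, it stays inside fragments defined constraint by constraint, such as $\PML^{\top,\mathcal{H}}$. This is how the paper combines Lemma~\ref{sec:polyconstr;lem:globhomconstr1} with this proposition in Figure~\ref{sec:intro:fig:overview}. Your merged constraint $(\psi\land\psi'[x_i^{+}/x_i])\lor(\neg\psi\land\psi'[x_i^{-}/x_i])$ is no longer a single homogeneous inequality, so your construction does not yield that corollary.

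\textbf{Bookkeeping fixes.}
\begin{itemize}
\item Your induction measure (the number of modal subformulas of depth at least $2$) can increase if you split on a single $\chi$ inside an enclosing formula of depth greater than $2$. Duplicating $\varphi_i'$ then duplicates deeper siblings. Instead, always work on an enclosing modal formula of depth exactly $2$ and split simultaneously on all its depth-1 inner subformulas (your $2^r$ split). Each step then replaces one depth-2 occurrence by a depth-1 formula, and the measure drops by one.
\item Drop the original argument $\varphi_i'$ from the new argument list.
\item Give $\psi$'s variables fresh names with respect to $\psi'$.
\end{itemize}
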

\begin{proof}
Let $\varphi $ be a PML formula with only global modalities.
%We first use De Morgan's laws and Lemma~\ref{lem:neg-pml} to assume that all negations in $\varphi$ are moved inwards 
%such that negations only appear directly in front of propositions and inside Peano formulas.
For a set $T$ of modal formulas and a PML formula $\gamma$, we write
$\gamma^T$ for the formula where every strict modal subformula $\mu$ of $\gamma$ is replaced 
with $\top$ if $\mu \in T$ and with $\bot$ otherwise.  
Here, strict means that if $\gamma$ is a modal formula, then $\gamma$ itself is neither replaced with $\top$ nor $\bot$.
Let $S$ be the set of strict modal subformulas of $\varphi$.
Now $\varphi$ is equivalent to
\[\bigvee_{T \subseteq S} \big( \varphi^T \wedge \bigwedge_{\tau \in T} \tau^T \wedge \bigwedge_{\sigma \in S \setminus T} \neg \sigma^T \big)\]
which is a formula of modal depth at most 1.
%Note that since negations do not appear outside of modal subformulas,
%we only have to check that the modal formulas in $T$, whose satisfiability make $\varphi$ evaluate to true,
%are satisfied.
Here, the modal subformulas can be checked without nesting them since they only have global modalities,
i.e., all modal formulas are  evaluated over the same pointed graph.
\end{proof}

\begin{figure}
  \centering 
  \input{graphics/treelikes.tex}
  \caption{Let $\varphi$ be a formula with $\md(\varphi) = 2$ that only uses the modality $E_\text{out}$.
  The top two graphs are members of $(\mathcal{T}^\circlearrowright_{\varphi})^p_\bullet$, where the blue
  node indicates the focus. The bottom graphs are not included in $(\mathcal{T}^\circlearrowright_{\varphi})^p_\bullet$
  since multiple predecessors of the red node have the same trace set. For clarity reasons, self-loops are not depicted.}
  \label{sec:nested;fig:treelikeexamples}
\end{figure}

However, the situation is entirely different if we consider local modalities $E_\text{in}, E_\text{out}$ only. 
Let $\varphi \in \PML^{E}$ and $(G = (V,E,L),v)$ be a pointed graph. We denote by 
$\walks^{\leq i}_\varphi(u)$ the set of all traces $E_0, \dotsc, E_{j-1} \in \walks^j_\varphi$ with $j \leq i$ such that there is a 
walk from $v$ to $u$ respecting $E_0, \dotsc, E_{j-1}$.
We define the class $(\mathcal{T}^\circlearrowright_{\varphi})^p_\bullet$ of 
all pointed graphs $(G = (V,E,L),v)$ where $v$ is marked and for all $w \in V$ and
$1 \leq i \leq \mathit{md}(\varphi)$ if there is a walk $u_0 u_1 \dotsb u_{i-1} w$ with $u_0 = v$
that corresponds to a trace $E_0, \dotsc, E_{i-1} \in \walks^i_\varphi$
then we have $u_{i-1} \in \neigh(u_{i-1})$ and 
for all $w_1, w_2 \in \neigh_x(w)$ with $x = \text{in}$ if $E_{i-1} = E_\text{out}$ and  
$x = \text{out}$ if $E_{i-1} = E_\text{in}$ we have that if $\walks^{\leq i-1}_\varphi(w_1) = \walks^{\leq i-1}_\varphi(w_2)$
then $w_1 = w_2$. 

While the definition of $(\mathcal{T}^\circlearrowright_{\varphi})^p_\bullet$ is rather technical, assuming for instance that $\modalities{i}{\varphi} = \{E_\text{out}\}$ for all $i \leq \md(\varphi)$ implies that 
$(\mathcal{T}^\circlearrowright_{\varphi})^p_\bullet$ includes trees with self-loops where the focus is the root. 
In Figure~\ref{sec:nested;fig:treelikeexamples}, we depict some examples. We identify $(\mathcal{T}^\circlearrowright_{\varphi})^p_\bullet$ 
as a suitable candidate to frame the expressive capabilities of mean MPNN compared to $\PML^{E}$.
\begin{theorem}
    \label{sec:nested;th:meanmpnn}
    Let $\varphi \in \PML^{E}$.
    There is $M_\varphi \in \meanmpnn$ with $\mathcal{O}(\mathit{md}(\varphi)\textit{deg}(\varphi))$ layers
    that recognises $\varphi$ over all regular graphs in $(\mathcal{T}^\circlearrowright_{\varphi})^p_\bullet$ 
    with certainty $\frac{1}{|V|^{\mathcal{O}(\mathit{md}(\varphi)\textit{deg}(\varphi))}}$.
\end{theorem}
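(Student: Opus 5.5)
The plan is an induction on modal depth that lifts the construction of Theorem~\ref{sec:polyconstr;th:localconstraintsmean} level by level. The difficulty is that a nested modal subformula $\chi$ of depth $d$ must be evaluated not only at the focus $v$ but at every node $u$ reachable from $v$ by a walk respecting a trace in $\walks^{\md(\varphi)-d}_\varphi$. The trick based on the marked node in Theorem~\ref{sec:polyconstr;th:localconstraintsmean} confines nonzero values to $v$. This is exactly what we cannot do at inner levels, since a separate marker for each relevant node is unavailable.

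The tree-like class $(\mathcal{T}^\circlearrowright_{\varphi})^p_\bullet$ is designed to substitute for per-node markers. First, every node $u_{i-1}$ on a relevant walk has a self-loop. Second, for each relevant node $w$, its relevant predecessors in the appropriate direction are distinguished by their trace sets $\walks^{\leq i-1}_\varphi$.

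\textbf{Preprocessing.} Starting from the mark on $v$, I will use $\md(\varphi)$ preliminary rounds of local mean aggregation, each followed by a $\relu(x-\relu(1-x'))$-type gadget. This propagates Boolean indicators of ``$u$ is reachable from $v$ via trace $t$'' for every $t \in \walks^{\leq \md(\varphi)}_\varphi$. A positive mean is thresholded to $1$ using the bound $\frac{1}{|V|}$, which is available from global aggregation, and the regularity of $G$ ensures that the relevant values are uniform. By the uniqueness condition, these trace-set indicators act as local identifiers: the node $u$ whose local count is needed is uniquely determined from each of its relevant neighbours $w$ together with its trace set.

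\textbf{Bottom-up evaluation.} I will then evaluate subformulas from depth $1$ up to $\md(\varphi)$. For a subformula $\chi$ of depth $d$ evaluated at nodes with trace $t$, I run the monomial construction of Theorem~\ref{sec:polyconstr;th:localconstraintsmean}. Where that construction restricts to the marked node, I instead restrict to nodes carrying the indicator for $t$, using the same $\relu$ gadget. The self-loop at such a node $u$ lets the value sent to a neighbour and back return to $u$. The trace-set uniqueness ensures that when a neighbour $w$ aggregates, the only contributing node in the relevant trace class is $u$ itself, so no contributions from distinct $u$'s mix. Regularity makes every denominator $(n_\text{in}n_\text{out})^{\deg}$, and these are aligned exactly as before. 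Lemma~\ref{app:omitted;lem:peanoformulas} then produces the truth value at scale $r$. I will renormalise it to $\{0,1\}$ by thresholding, using $r_2=\frac{1}{|V|^{2\deg(\varphi)}}$ obtained from global means, so that the result serves as a Boolean input for the next level. Only the final level is left at the certainty scale.

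Each level costs $\mathcal{O}(\deg(\varphi))$ layers, which gives $\mathcal{O}(\md(\varphi)\deg(\varphi))$ layers in total, and the certainty is inherited from the last level.

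\textbf{Main obstacle.} I expect the key difficulty to be verifying that the aggregation round trips (out via the self-loop to $w$, then back) deliver to $u$ precisely $\frac{1}{n_\text{in}n_\text{out}}$ times the count over $u$'s own neighbourhood, uncontaminated by other same-level nodes. I would first attempt this argument by induction on the trace length, using the uniqueness clause of the class definition.
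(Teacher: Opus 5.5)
Your overall architecture is the paper's: propagate trace flags from the marked focus, separate different predecessors by trace sets, evaluate level by level, and align denominators using self-loops and regularity. However, the step that carries you from one level to the next cannot be carried out. Twice you ``threshold to $1$'' a value that is either $0$ or at least an input-dependent bound $\epsilon$, with $\epsilon$ supplied by global means. For the reachability flags $\epsilon=1/|V|$, and for the Peano outputs $\epsilon=r_2=1/|V|^{2\mathit{deg}(\varphi)}$. A combination function is a fixed-weight ReLU FNN, so it is Lipschitz with some constant $L$. Hence any such gadget $f$ satisfies $|f(\epsilon,\epsilon)-f(0,\epsilon)|\le L\epsilon<1$ as soon as $|V|$ is large. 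Having $\epsilon$ as an input does not help, since rescaling would require multiplying by $1/\epsilon$.

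The rest of your plan depends on this renormalisation. The gate $\relu(x-\relu(1-x'))$ is only correct for $x'\in\{0,1\}$: for $x'=\epsilon$ it returns $\relu(x-1+\epsilon)$, which vanishes unless $x>1-\epsilon$. Likewise, your claim that the certainty is inherited from the last level alone presupposes that every level restarts from Boolean values.

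The paper never scales up. Reachability flags are kept as $\min(z_i,y_T)\in\{0,1/|V|^i\}$. Before the depth-$1$ subformulas are evaluated, all Boolean values are pushed \emph{down} via $\relu(\epsilon-\relu(1-x))$ with $\epsilon=1/|V|^{\md(\varphi)-1}$. Before each further level, already computed modal values are replaced by their minimum with an even smaller power of $1/|V|$. All gates are then subtractive at the matching scale, e.g.\ $\relu\bigl(x-\sum_{T\in I}(1/|V|^{|T|}-y_T)-\sum_{T\notin I}y_T\bigr)$. This is correct precisely because the gated quantity $x$ is itself bounded by that scale. The price is that the scales multiply across levels, which is exactly where the certainty $1/|V|^{\mathcal{O}(\md(\varphi)\mathit{deg}(\varphi))}$ comes from.

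Two further points. First, your channels must be indexed by the full trace set $\walks^{\le \md(\varphi)-1}_\varphi(u)$, one channel per subset $I$ as in the gadget above, not by a single trace $t$. The uniqueness clause only separates neighbours of $w$ whose trace sets differ, and two such neighbours may still share $t$. Second, if you would rather keep Boolean renormalisation, justify it through the class rather than through $|V|$. Apply the uniqueness clause at $w=v$ along the self-loop walk $v\,v$ (which exists unless $G$ is edgeless). This bounds the relevant degree of $v$ by the number of possible sets $\walks^{\le 0}_\varphi(\cdot)$. Regularity forces $n_\text{in}=n_\text{out}$, so all degrees are bounded by a constant, and fixed-weight clamps $\min(1,c\,y)$ become available. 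You would have to make that argument explicitly.
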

\begin{proof}
  Let $\chi_1, \dotsc, \chi_n$ be an enumeration of the subformulas of $\varphi$ such that if $\chi_j$ is a subformula
  of $\chi_i$, then $j \leq i$, there is some $h \leq n$ such 
  $\mathit{md}(\chi_i) = 0$ if and only if $i \leq h$ for all $i \leq n$, and we have $\chi_n = \varphi$.
  W.l.o.g.\ we assume that $p=0$. Let $(G=(V,E,L),v) \in (\mathcal{T}^\circlearrowright_{\varphi})^p_\bullet$.
  Similar to previous proofs, we describe how to construct $M_\varphi$. 
  Layer $l_1$ is constructed as described by Lemma~\ref{app:omitted;lem:boolenformulas} and used to evaluate all 
  $\chi_i$ with $i \leq h$.

  Next, we use layers $l_1, \ldots, l_{\mathit{md}(\varphi)-1}$ to ``propagate'' the mark of the focus
  throughout the graph along traces of $\walks^{i}_\varphi$ for $i \leq \md(\varphi)-1$. In layer $l_i$ we use $\locagg_\text{in}$ if
  there is trace $T=E_0, \dotsc, E_{i-1} \in \walks^i_\varphi$ with $E_{i-1} = E_\text{out}$ and 
  $\locagg_\text{out}$ if
  there is trace $T=E_0, \dotsc, E_{i-1} \in \walks^i_\varphi$ with $E_{i-1} = E_\text{in}$
  to aggregate for all $u \in V$ a new flag $y_T$ stored in some dimension associated with $T$.
  Simultaneously, we use $\globagg$ in each layer $l_i$ to compute $z_i = \frac{1}{|V|^i}$ through repeated aggregation over the marking dimension 
  as done in previous constructions. 
  Finally, in $\comb_i$, we compute $\min(z_i, y_T)$ for each $T \in \walks^i_\varphi$.
  This ensures that we stored $\frac{1}{|V|^i}$ in a dimension associated with $T$ 
  if there exists a walk from $v$ to $u$ respecting $T \in \walks_\varphi^i$, and its $0$ otherwise.

  Next, in layer $l_{\mathit{md}(\varphi)}$, we employ the value $\frac{1}{|V|^{\mathit{md}(\varphi)-1}}$,
  obtained through repeated global aggregations in layers $l_1, \ldots, l_{\mathit{md}(\varphi)-1}$, to compute
  $(\bs{x}^{(\mathit{md}(\varphi))}_u)_{i} = \relu(\frac{1}{|V|^{\mathit{md}(\varphi)-1}} - \relu(1 - (\bs{x}^{(\mathit{md}(\varphi)-1)}_u)_{i}))$
  for all $u \in V$ and $i\leq h$. Informally, this means a shift of the representative values for all so far evaluated
  formulas (which are exclusively Boolean) with $0 \mapsto 0$ and $1 \mapsto \frac{1}{|V|^{\mathit{md}(\varphi)-1}}$.

  Now, consider subformula $\chi_i = \langle \pi_{i,1}, \dots, \pi_{i,m} \rangle_{\psi_i(x_{i,1}, \ldots, x_{i,m})}(\varphi_{i,1}, \ldots, \varphi_{i,m})$
  with $\mathit{md}(\chi_i) = 1$. First, we argue how to evaluate monomials $\prod_{j=1}^{k_i} x_{i_j}$ in $\psi_i$.
  W.l.o.g assume that the modal depth of $\chi_i$ implies that $\chi_i$ must be evaluated at nodes $u$ such that there exists a walk
  starting from $v$ to $u$ respecting some $T \in \walks^{\mathit{md}(\varphi)-1}_\varphi$. We remark that the structure of $\varphi$ exactly characterises 
  at which distance from $v$ some $\chi_i$ must be evaluated. 
  We fix such a node $u$ in the following.
  We utilise layers $l_{\mathit{md}(\varphi)+1}, \dotsc, l_{\mathit{md}(\varphi)+2\mathit{deg}(\varphi)-1}$ to evaluate $\prod_{j=1}^{k_i} x_{i_j}$ as done in
  Theorem~\ref{sec:polyconstr;th:localconstraintsmean}, but with the following adjustments. W.l.o.g.\ assume 
  that $x_{i_1}$ corresponds to modality $E_\text{out}$ and $x_{i_2}$ corresponds to $E_\text{in}$. 
  We use $\locagg_\text{out}$ in layer $l_{\mathit{md}(\varphi)+1}$ as before. 
  Now, in $\comb_{\mathit{md}(\varphi)+1}$ we add gadget 
  $\relu(x - \sum_{T \in I} (\frac{1}{|V|^{|T|}} - y_T) - \sum_{T \in \walks^{\leq \md(\varphi)-1}_\varphi\setminus I} y_T)$
  for all $I \subseteq \walks^{\leq \md(\varphi)-1}_\varphi$ where $y_T$ is the flag we previously computed 
  with $y_T=\frac{1}{|V|^{|T|}}$ if there is a walk from 
  $v$ to $u$ respecting $T$. 
  This ensures that we preserve a non-zero value corresponding to the partial evaluation of 
  $\prod_{j=1}^{k_i} x_{i_j}$ at $u$ in a dimension identified by $\walks^{\leq \md(\varphi)-1}_\varphi(u)$. 
  Note that in order for this gadget to work correctly we require the shift in values (0 to 0 and 1 to $\frac{1}{|V|^{\mathit{md}(\varphi)-1}}$) 
  which is performed by layer $l_{\md(\varphi)}$.
  Due to the definition of $(\mathcal{T}^\circlearrowright_{\varphi})^p_\bullet$ 
  it is ensured that all nodes $w \in \neighin(u)$ have at most one $u' \in \neighout(w)$ with a specific trace set $\walks^{\leq \md(\varphi)-1}_\varphi(u')$.
  Thus, the evaluation of $\prod_{j=1}^{k_i} x_{i_j}$ proceeds as before, but separately for each dimension corresponding to some 
  $\walks^{\leq \md(\varphi)-1}_\varphi(u')$ with $u' \in V$.
  In the end, we utilise that 
  $u$ has a self-loop, again as given by the definition of $(\mathcal{T}^\circlearrowright_{\varphi})^p_\bullet$, to align the denominators 
  for all monomials by repeated local aggregations along this self-loop. Note that this only works due to $G$ being regular, similar to 
  Theorem~\ref{sec:polyconstr;th:localconstraintsmean}. Given that 
  we evaluated all monomials in $\psi_i$ this way and normalised constant parts of $\psi_i$ in the same way, we 
  evaluate $\chi_i$ as described in Lemma~\ref{app:omitted;lem:peanoformulas} with $r_2 = \frac{1}{|V|^{\mathit{md}(\varphi)-1+2\mathit{deg}(\varphi)}}$, computed 
  using repeated global aggregations in previous layers.

  Now, for modal formulas $\chi_i$ with $\mathit{md}(\chi_i) > 1$ we proceed analogously, but we adjust the values representing 
  all so far evaluated formulas to $\frac{1}{|V|^{\mathit{md}(\varphi)+((\mathit{md}(\chi_i)-1)2\mathit{deg}(\varphi))}}$ and $0$, assuming that $\chi_i$ must be evaluated 
  at nodes $u$ with distance $\md(\varphi)-\md(\chi_i)$ from $v$. For all $\chi_j$ with $j \leq h$ 
  this works as described before in preparation of handling $\mathit{md}(\chi_i)=1$ subformulas. For $\chi_j$ which are already evaluated modal formulas we take the minimum of 
  the currently representing value and $\frac{1}{|V|^{\mathit{md}(\varphi)+((\mathit{md}(\chi_i)-1)2\mathit{deg}(\varphi))}}$.
  Then, after resolving all modal formulas 
  with $\md(\chi_i) = \md(\varphi)$ we evaluate the remaining subformulas as in Theorem~\ref{sec:polyconstr;th:localconstraintsmean}, but 
  with $\frac{1}{|V|^{\mathit{md}(\varphi)(2\mathit{deg}(\varphi))}}$. Note that this, again, requires shifting representative values.  
\end{proof}

Analogous to Theorem~\ref{sec:polyconstr;th:localconstraintssummeanmax}, 
we get that also allowing for sum or max as aggregation besides taking the mean in turn 
allows us to drop the regularity assumption.
\begin{theorem}
  \label{sec:nested;th:mpnnmeansummax_and_mixed}
  Let $\varphi \in \PML^E$, and $A \in \{\{\text{mean}, \text{sum}\}, \{\text{mean}, \text{max}\}\}$. 
  There is $M_\varphi \in \mathcal{M}_A$ that recognises $\varphi$ over all graphs in $(\mathcal{T}^\circlearrowright_{\varphi})^p_\bullet$ 
  with certainty $\frac{1}{|V|^{\mathcal{O}(\mathit{md}(\varphi)\textit{deg}(\varphi))}}$.
\end{theorem}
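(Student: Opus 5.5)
We reuse the construction from the proof of Theorem~\ref{sec:nested;th:meanmpnn} and change only the layers that evaluate monomials and align denominators. This mirrors the step from Theorem~\ref{sec:polyconstr;th:localconstraintsmean} to Theorem~\ref{sec:polyconstr;th:localconstraintssummeanmax}. Nothing else needs to change:
\begin{itemize}
\item layer $l_1$ evaluates the Boolean subformulas via Lemma~\ref{app:omitted;lem:boolenformulas};
\item layers $l_1,\dots,l_{\md(\varphi)-1}$ propagate the trace flags $y_T$ with value $\frac{1}{|V|^{|T|}}$;
\item representative values are shifted $1\mapsto \frac{1}{|V|^{(\cdot)}}$ before each modal depth is handled.
\end{itemize}
None of these steps uses regularity. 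The propagation of flags needs only the existence of walks and global mean aggregation of the marking dimension.

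For each subformula $\chi_i$ with $\md(\chi_i)=1$ and each node $u$ reachable from $v$ along a trace in $\walks^{\md(\varphi)-1}_\varphi$, we evaluate a monomial $\prod_{j=1}^{k_i}x_{i_j}$ by alternating aggregations, as in Theorem~\ref{sec:polyconstr;th:localconstraintssummeanmax}.
\begin{itemize}
\item The first factor is obtained at $u$ by local mean aggregation, which yields $\frac{1}{|\neigh_x(u)|}x_{i_1}$.
\item The gadget indexed by trace sets, $\relu(x-\sum_{T\in I}(\frac{1}{|V|^{|T|}}-y_T)-\sum_{T\notin I}y_T)$, isolates this value in the dimension belonging to $\walks^{\le \md(\varphi)-1}_\varphi(u)$.
\item The value is pushed to the neighbours with local sum (or max) aggregation, then multiplied there by the indicator of $\varphi_{i_2}$ using the usual $\relu$ multiplication gadget.
\item It is pulled back to $u$ with local mean aggregation, and the procedure repeats.
\end{itemize}
The tree-likeness condition of $(\mathcal{T}^\circlearrowright_{\varphi})^p_\bullet$ guarantees that each neighbour $w$ has at most one relevant $u'$ with a given trace set. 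So a sum over $\neigh(w)$ restricted to that dimension equals the single value from $u$, and sum and max coincide; hence max works as well. Each monomial at $u$ therefore carries a denominator $\prod_j n_{j}$, where every $n_j\in\{|\neighin(u)|,|\neighout(u)|\}$ is the degree of $u$ itself, not of its neighbours. This is exactly why regularity becomes unnecessary.

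The main obstacle is aligning denominators without regularity. In Theorem~\ref{sec:nested;th:meanmpnn}, repeated mean aggregation along the self-loop of $u$ works only because all neighbourhoods have the same size. I would handle this by emulating the marked-node trick locally, using the self-loop guaranteed at $u$ by the definition of $(\mathcal{T}^\circlearrowright_{\varphi})^p_\bullet$.
\begin{itemize}
\item Apply the trace-set isolation gadget, so that only $u$ holds a nonzero value in the relevant dimension.
\item Push the value to the neighbours of $u$ with sum (or max) aggregation; by tree-likeness, each neighbour receives exactly $u$'s value.
\item Pull it back to $u$ with mean aggregation over the same direction. This divides by $|\neigh_x(u)|$ exactly, multiplying the value by $\frac{1}{|\neigh_x(u)|}$ with no neighbour-dependent distortion.
\end{itemize}
Repeating this, every monomial and every constant (the constant obtained from a shifted flag value at $u$) is brought to the common denominator $|\neighin(u)|^{\deg(\varphi)}|\neighout(u)|^{\deg(\varphi)}\le |V|^{2\deg(\varphi)}$. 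The constant $b$ must also absorb the $\frac{1}{|V|^{(\cdot)}}$ factor from the shift, and the monomials must be multiplied by the same global factor through repeated global aggregation of the mark.

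With these aligned values we apply Lemma~\ref{app:omitted;lem:peanoformulas}, using $r_1$ equal to the common denominator and $r_2=\frac{1}{|V|^{\md(\varphi)-1+2\deg(\varphi)}}$, which is computed by global aggregation. For $\md(\chi_i)>1$ we proceed inductively as in Theorem~\ref{sec:nested;th:meanmpnn}, applying the same minimum/shift adjustments. Each depth level costs $\mathcal{O}(\deg(\varphi))$ layers and contributes a factor of at least $\frac{1}{|V|^{\mathcal{O}(\deg(\varphi))}}$. This gives a total certainty of $\frac{1}{|V|^{\mathcal{O}(\md(\varphi)\deg(\varphi))}}$.
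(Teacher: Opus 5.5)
\textbf{The overall plan is the paper's, but the denominator-alignment step fails as written.} The paper's argument for this theorem is just Theorem~\ref{sec:nested;th:meanmpnn} combined with the modification of Theorem~\ref{sec:polyconstr;th:localconstraintssummeanmax}, and that is what you do. Replacing the intermediate aggregation at the neighbours by sum or max makes a monomial evaluated at $u$ carry only $u$'s own in- and out-degrees. Your use of the trace-set dimensions and tree-likeness, so that the sum/max sees a single contributor, is also right.

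The gap is in the step you call the main obstacle. Your alignment procedure (isolate at $u$, push to $\neigh_x(u)$ with sum or max, pull back with mean over $\neigh_x(u)$) does not divide by anything. By your own tree-likeness argument, after the push every node of $\neigh_x(u)$, including $u$ via its self-loop, holds exactly the isolated value $a$. So the mean computed at $u$ is $\frac{1}{|\neigh_x(u)|}\sum_{w \in \neigh_x(u)} a = a$. Push--pull yields a factor $\frac{1}{|\neigh_x(u)|}$ only in the monomial step, where the copies at the neighbours are gated by the indicator of $\varphi_{i_j}$, giving $a\cdot x_{i_j}/|\neigh_x(u)|$. Ungated, it is the identity. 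As written, monomials of different degrees and the constants keep different denominators, so Lemma~\ref{app:omitted;lem:peanoformulas} cannot be applied with a common $r_1$.

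\textbf{The fix is simpler than your workaround.} The self-loop alignment of Theorem~\ref{sec:nested;th:meanmpnn} never needed regularity. After isolating $a$ in the dimension of $\walks^{\le \md(\varphi)-1}_\varphi(u)$, do a single local mean aggregation at $u$ over $\neigh_x(u)$. This neighbourhood contains $u$ by the self-loop, and after isolation $u$ is its only nonzero entry. The result is $a/|\neigh_x(u)|$, independently of the neighbours' degrees. This is the nested analogue of the $\relu(x-\relu(1-x_\text{mark}))$-then-aggregate step of Theorems~\ref{sec:polyconstr;th:localconstraintsmean} and~\ref{sec:polyconstr;th:localconstraintssummeanmax}. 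Regularity was needed there only because the all-mean monomial evaluation divides by the neighbours' degrees. Your sum/max step already removes that, so the alignment can be kept verbatim.

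\textbf{Drop the global-factor multiplication.} You propose multiplying monomials by a global factor via repeated global aggregation of the mark. A ReLU network cannot multiply two node-dependent quantities. Also, a global mean over a dimension that is nonzero at many nodes $u$ mixes their values. The step is unnecessary anyway. The first mean aggregation over the shifted indicators already gives each monomial the factor $\frac{1}{|V|^{\md(\varphi)-1}}$. The constants can take the same factor from the globally computed value $\frac{1}{|V|^{\md(\varphi)-1}}$ before being divided by $u$'s degrees through the self-loop.
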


%% file: graphics/treelikes.tex
\begin{tikzpicture}[
  dot/.style={draw,circle,fill=black!50,inner sep=3pt},
  highlightR/.style={draw,circle,fill=blue!50,inner sep=3pt},
  highlightB/.style={draw,circle,fill=red!50,inner sep=3pt},
    halfRB/.style={
    draw,
    circle,
    inner sep=3pt,
    path picture={
      % left half red
      \fill[red!50]
        (path picture bounding box.south west)
        rectangle
        (path picture bounding box.north);
      % right half blue
      \fill[blue!50]
        (path picture bounding box.south)
        rectangle
        (path picture bounding box.north east);
    }
  }
]

% --- tree ---

% highlighted root
\node[highlightR] (r)  at (0,0) {};

\node[dot] (a)  at (-.7,-1) {};
\node[dot] (b)  at (.7,-1) {};

\node[dot] (a1) at (-1.1,-2) {};
\node[dot] (a2) at (-.3,-2) {};
\node[dot] (b1) at (.3,-2) {};
\node[dot] (b2) at (1.1,-2) {};

\draw[->] (r) -- (a);
\draw[->] (r) -- (b);
\draw[->] (a) -- (a1);
\draw[->] (a) -- (a2);
\draw[->] (b) -- (b1);
\draw[->] (b) -- (b2);

% --- 3-cycle ---

\def\cx{4}
\def\cy{-1}
\def\r{1}

% nodes (c1 highlighted as before)
\foreach \i in {1,...,3}{
  \ifnum\i=1
    \node[highlightR] (c\i) at ({\cx+\r*cos(120*\i)},{\cy+\r*sin(120*\i)}) {};
  \else
    \node[dot] (c\i) at ({\cx+\r*cos(120*\i)},{\cy+\r*sin(120*\i)}) {};
  \fi
}

% cycle edges
\foreach \i [evaluate=\i as \j using {int(mod(\i,3)+1)}] in {1,...,3}{
  \draw[->] (c\i) -- (c\j);
}

% DIAMOND
% nodes
\node[highlightR] (v) at (0,-3) {};
\node[dot] (l) at (-1,-4) {};
\node[dot] (r) at (1,-4) {};
\node[highlightB] (b) at (0,-5) {};

% edges
\draw[->] (v) -- (l);
\draw[->] (v) -- (r);
\draw[->] (l) -- (b);
\draw[->] (r) -- (b);

% RECTANGLE 
\node[halfRB] (a) at (3,-3) {};
\node[dot] (b) at (5,-3) {};
\node[dot] (d) at (3,-5) {};

% edges (both directions)
\draw[->] (a) -- (b);
\draw[->] (b) -- (a);

\draw[->] (d) -- (a);
\draw[->] (a) -- (d);
\end{tikzpicture}

%% file: sections/outlook.tex
We investigated the capabilities of MPNN, primarily using mean aggregation, in recognising 
polynomial count properties in graphs. We demonstrated that, without further assumptions,
mean MPNN that are capable of capturing
homogeneous polynomial constraints over global properties
(Lemma~\ref{sec:polyconstr;lem:globhomconstr1}).
Subsequently, we proved that a simple assumption about pointed graphs, namely
that the focus is marked, allows mean MPNN to recognise arbitrary
polynomial constraints over regular pointed graphs, counting globally or locally without nesting
(Theorem~\ref{sec:polyconstr;th:globconstr1} and Theorem~\ref{sec:polyconstr;th:localconstraintsmean}).
Interestingly, we found that by allowing additional sum or max aggregations, the
regularity assumption can be omitted
(Theorem~\ref{sec:polyconstr;th:mixedmodmeansummax}).
Considering nested modalities, we demonstrated that for graphs with a somewhat tree-like structure,
the mean plus sum or max MPNN can capture all polynomial constraints
(Theorem~\ref{sec:nested;th:mpnnmeansummax_and_mixed}).
Again, when considering MPNN using only mean aggregation, we required a restriction
to regular, tree-like graphs
(Theorem~\ref{sec:nested;th:meanmpnn}).

Building on our findings, future work should explore to what extent we can generalise
PML along the modality dimension. For example, whether allowing arbitrary
Boolean combinations of the atomic modalities $\text{id}$, $E_\text{in}$, and $E_\text{out}$
leads to similar findings. Additionally, it can be envisaged that MPNN are also capable
of comparing counts of neighbourhoods at different distances to the focus point. However,
this is not incorporated into the logic PML and, thus, requires additional work.

%% file: sections/proofs.tex
In the following technical lemmas and statements, we always assume that 
$\comb$ functions are realised by FNN with ReLU ($\relu(x) = \max(0,x)$) 
activations.

We denote by PL the fragment of \PML{} formulas $\varphi$ such that 
there is no modal formula $\varphi' = \langle \pi_1, \dotsc, \pi_m \rangle_{\psi(x_1, \dotsc, x_m)}(\varphi_1, \dotsc, \varphi_m)$ with 
$\varphi' \in \mathit{sub}(\varphi)$. In other words, PL are effectively propositional formulas.
\begin{lemma}
  \label{app:omitted;lem:boolenformulas}
  Let $\varphi_1, \dotsc, \varphi_n \in \text{PL}$ over the finite set of propositions $P$. There is a 
  combination function $\comb: \{0,1\}^{4|P|} \to \{0,1\}^{n} \times \{0\}^m$ for some $m \in \nats$, 
  such that for all graphs $G = (V,E,L)$ with $|P|$ colours, nodes $v \in V$, and $i \in \{1, \dotsc, n\}$
  we have  
  \begin{align*}
    (\comb(\bs{x}^{(0)}_v,\bs{y}_{1}, \bs{y}_{2},\bs{y}_{3}))_i= 1 && \text{iff} && G,v \models \varphi_i
  \end{align*}
  where $\bs{y}_{1}$, $\bs{y}_{2}$, and $\bs{y}_{3}$ are the results coming from any form of local and global aggregations.
\end{lemma}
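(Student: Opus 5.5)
Since every $\varphi_i$ lies in PL, each one is a Boolean combination of propositions $p_j \in P$. Its truth value at $v$ therefore depends only on $\bs{x}^{(0)}_v = L(v) \in \{0,1\}^{|P|}$. The plan is to ignore the aggregated inputs $\bs{y}_1,\bs{y}_2,\bs{y}_3$ entirely, giving them weight $0$, and build a ReLU FNN on $\bs{x}^{(0)}_v$ that evaluates each $\varphi_i$ bottom-up along its syntax tree.

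The construction goes by structural induction on PL formulas. For every $\chi \in \mathit{sub}(\varphi_1) \cup \dots \cup \mathit{sub}(\varphi_n)$ we produce a gadget with output in $\{0,1\}$ equal to $1$ iff $G,v \models \chi$, on every Boolean input:
\begin{itemize}
\item For a proposition $p_j$, use $\relu(x_j)$, where $x_j = (\bs{x}^{(0)}_v)_j$. This equals $x_j$ because $x_j \in \{0,1\}$.
\item For a negation, use $\neg\chi \mapsto \relu(1 - x_\chi)$.
\item For a conjunction, use $\chi_1 \land \chi_2 \mapsto \relu(x_{\chi_1} + x_{\chi_2} - 1)$.
\end{itemize}
By induction, each gadget maps $\{0,1\}$-valued inputs to exactly $0$ or $1$, matching the semantics. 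The gadgets are then stacked according to subformula depth. Subformulas computed at shallower layers must be carried forward to later layers; since they are nonnegative, the identity $\relu(x)=x$ does this. All branches are padded to a common depth $D$, the maximal syntactic nesting depth of the $\varphi_i$, so the result is a well-defined layered FNN.

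In the final layer, the first $n$ outputs are the gadgets for $\varphi_1,\dots,\varphi_n$. The remaining $m$ outputs are neurons with zero weights and zero bias, which gives the $\{0\}^m$ component; these are spare dimensions later constructions may use. The only thing to check is the uniform-depth bookkeeping, i.e.\ that the copy gadgets preserve values in $\{0,1\}$. This holds trivially because ReLU is the identity on nonnegative inputs. I do not expect a genuine obstacle here.
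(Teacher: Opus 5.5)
Your proposal is correct and matches the paper's proof: it gives zero weight to $\bs{y}_1,\bs{y}_2,\bs{y}_3$ and evaluates each $\varphi_i$ bottom-up with the same gadgets $\relu(x_j)$, $\relu(1-x_\chi)$, and $\relu(x_{\chi_1}+x_{\chi_2}-1)$. Your extra bookkeeping (identity carry-forward to a common depth, and zero-weight neurons for the $\{0\}^m$ block) only spells out what the paper leaves implicit.
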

\begin{proof}
    Due to the fact that all formulas $\varphi_1, \dotsc, \varphi_n \in \text{PL}$,
    implying that we do not need to use any information coming from local or global aggregation to evaluate these.
    Therefore, we construct the FNN $N$ representing $\comb$ such that it simply ignores the inputs 
    $\bs{y}_{1}$, $\bs{y}_{2}$, and $\bs{y}_{3}$. This is done by weighting these inputs with $0$ in the 
    first layer of $N$. 

    Let $G=(V,E,L)$ be a graph with $|P|$ colours and $v \in V$.
    We proceed the construction of $N$ as follows. If 
    $\chi_i = p_j$, we add the gadget $\relu((\bs{x}^{(0)}_v)_j)$ to $N$. In other
    cases, we use $\relu(1-x_j)$ if $\chi_i = \neg \chi_j$,
    and $\relu(x_{j_1} + x_{j_2} - 1)$ if $\chi_i = \chi_{j_1} \land \chi_{j_2}$.
    Here, the values $x_j$, $x_{j_1}$ and $x_{j_2}$ may
    not necessarily correspond to dimensions of $\bs{x}^{(0)}_v$,
    but can also correspond to the outputs of previously applied gadgets.    
\end{proof}

Let $\psi(x_1, \dotsc, x_n)$ be a Peano formula. 
We denote by $\textit{Mon}_\psi$ the set of all monomials $\prod_{j=1}^{k_i} x_{i_j}$ occurring in terms of $\psi$ and 
$\mathit{Const}_\psi$ the set of all constants $b$ occurring in terms of $\psi$.
\begin{lemma}
  \label{app:omitted;lem:peanoformulas}
  Let $\psi(x_1, \dotsc, x_n)$ be a Peano formula with 
  $\textit{Mon}_\psi = \{M_1, \dotsc, M_m\}$ and $\textit{Const}_\psi = \{b_1, \dotsc, b_k\}$ 
  , and $(c_1, \dotsc, c_n) \in \nats^n$ an assignment for $\psi$. For all $r_1,r_2 \in \rats$ with $r_1 \geq r_2 \geq 0$ 
  there is combination function $\comb$ such that
  \begin{align*}
    (\comb(\bs{x},\bs{y}_1,\bs{y}_2,\bs{y}_3))_i = r_2 && \text{iff} && (c_1, \dotsc, c_n) \models \psi,
  \end{align*}
  where $\bs{x}$ contains the values $r_1\cdot M_i(c_1, \dotsc, c_n)$, $r_1\cdot b_j$, $r_2$, for all $i\leq m$ and 
  $j \leq k$, and 
  $\bs{y}_{1}$, $\bs{y}_{2}$, and $\bs{y}_{3}$ are the results coming from any form of local and global aggregations.
\end{lemma}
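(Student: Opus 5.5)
The plan is to build the FNN realising $\comb$ directly from the structure of $\psi$, in the same style as Lemma~\ref{app:omitted;lem:boolenformulas}. All inputs coming from aggregation, $\bs{y}_1,\bs{y}_2,\bs{y}_3$, receive weight $0$ in the first layer, so only $\bs{x}$ matters. The value $r_2$ is available as an input coordinate of $\bs{x}$, so it can be used like a variable. This matters because every gadget below is then a ReLU of a linear combination of input coordinates, with no bias that would depend on $r_1$ or $r_2$.

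\textbf{Step 1: atomic constraints.} By the normalisation assumption, each atomic subformula of $\psi$ has the form $\sum_{i} a_i M_i \leq b$ with $a_i, b \in \ints$. From the inputs $r_1 M_i(c_1,\dots,c_n)$ and $r_1 b$, one linear layer computes
\[
t = r_1\Bigl(\sum_i a_i M_i(c_1,\dots,c_n) - b\Bigr).
\]
The gadget for the atom is
\[
g = \relu\bigl(r_2 - \relu(t)\bigr).
\]
We check the two cases.
\begin{itemize}
\item If the atom holds, then $t \leq 0$, so $\relu(t)=0$ and $g = r_2$.
\item If the atom fails, then the integer $\sum_i a_i M_i(c_1,\dots,c_n) - b$ is at least $1$, because $c_j \in \nats$ and all coefficients are integers. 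Hence $t \geq r_1 \geq r_2$, so $g = 0$.
\end{itemize}
This integrality gap, combined with the hypothesis $r_1 \geq r_2 \geq 0$, is the key point of the proof. It is exactly why the lemma must feed in the constants pre-scaled by $r_1$, so that they carry the same denominator as the monomials.

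\textbf{Step 2: Boolean structure.} Every atomic constraint is now represented by a value in $\{0, r_2\}$, with $r_2$ meaning true. We handle $\neg$ and $\land$ inductively along the syntax tree of $\psi$:
\begin{itemize}
\item $\neg\psi_1$ is computed by $\relu(r_2 - x_{\psi_1})$;
\item $\psi_1 \land \psi_2$ is computed by $\relu(x_{\psi_1} + x_{\psi_2} - r_2)$.
\end{itemize}
A case check on $x_{\psi_1}, x_{\psi_2} \in \{0, r_2\}$ shows that both gadgets again output values in $\{0, r_2\}$ with the correct truth value. The value $r_2$ has to be carried forward to later layers, for which the identity $r_2 = \relu(r_2)$ suffices since $r_2 \geq 0$. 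Stacking these gadgets according to the structure of $\psi$ gives an FNN whose designated output coordinate $i$ equals $r_2$ exactly when $(c_1,\dots,c_n) \models \psi$.

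\textbf{Main obstacle and degenerate case.} The only step that is not routine is Step 1, namely ensuring that a violated constraint is separated from a satisfied one by a margin of at least $r_2$ after rescaling. This follows from integrality together with $r_1 \geq r_2$. In the degenerate case $r_2 = 0$ every gadget outputs $0$. The stated equivalence is then only meaningful for $r_2 > 0$, which is the case in every application of this lemma in the paper.
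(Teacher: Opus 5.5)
Your proof is correct and follows the paper's argument: the same integrality gap combined with $r_1 \geq r_2$ for the atomic constraints, and the same gadgets $\relu(r_2 - x)$ and $\relu(x_1 + x_2 - r_2)$ for the Boolean structure. Your single gadget $\relu(r_2 - \relu(t))$ is equivalent to the paper's two-step clipping $\min(x_t, r_2)$ followed by $\relu(r_2 - \cdot)$. Your remark that the equivalence breaks down for $r_2 = 0$, which the statement formally allows, is a valid caveat the paper does not mention.
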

\begin{proof}
  Similar to Lemma~\ref{sec:polyconstr;lem:globhomconstr1}, whether a Peano term $t = \sum_{i=1}^K a_i \cdot \prod_{j=1}^{k_i} x_{i_j} \leq b$
  is satisfied by $(c_1, \dotsc, c_n)$ is checked using the gadget 
  $x_{t} = \relu((r_1\sum_{i=1}^K a_i \cdot \prod_{j=1}^{k_i} x_{i_j}) - r_1b)$ in the sense that $x_t = 0$ if 
  $(c_1, \dotsc, c_n) \models t$ and $x_t \geq r_1$ if not. Next, we use gadget $y_t=\relu(x_t - \relu(x_t-r_2))$ to compute $y_t = \min(x_t,r_2)$. 
  Finally, we use $z_t=\relu(r_2 - y_t)$ to get that $z_t = 0$ if $(c_1, \dotsc, c_n) \not\models t$ and $z_t = r_2$ if $(c_1, \dotsc, c_n) \models t$.
  
  To fully evaluate $\psi$, which potentially is a Boolean combination of Peano terms, we 
  we handle Boolean combinations using gadgets $\neg \psi_1 = \relu(r_2 - x_{\psi'})$ and 
  $\psi_1 \land \psi_2 = \relu(x_{\psi_1} + x_{\psi_2} - r_2)$ and combine them as determined 
  by the structure of $\psi$.
\end{proof}